\newif\ifMakeReviewDraft
\newif\ifUseColorLinks
\newif\ifDisplayMyComments
\newtheorem{thm}{Theorem}
\newtheorem{lem}[thm]{Lemma}
\definecolor{gray}{rgb}{0.5,0.5,0.5}	
\definecolor{red}{rgb}{1.0,0.0,0.0}	  
\definecolor{green}{rgb}{0.0,1.0,0.0}	  
\definecolor{blue}{rgb}{0.0,0.0,1.0}	  
\newcommand{\ie}{\textit{i.e.}}
\newcommand{\eg}{\textit{e.g.}}
\newcommand{\wrt}{w.{}r.{}t.}
\newacro{SVM}{Support Vector Machine}
\newacro{KKT}{Karush-Kuhn-Tucker}
\newacro{KNN}{$k$-nearest neighbor}
\newacro{PSD}{Proximal Subgradient Descent}
\newacro{DML}{Distance Metric Learning}
\newacro{R2LML}[R\textsuperscript{2}LML]{Reduced-Rank Local Metric Learning}
\newacro{T-R2LML}[T-R\textsuperscript{2}LML]{Transductive Reduced-Rank Local Metric Learning}
\newacro{E-R2LML}[E-R\textsuperscript{2}LML]{Efficient Reduced-Rank Local Metric Learning}
\newacro{EDM}{Euclidean Distance Matrix}
\newacro{MM}{Majorization Minimization}
\newacro{NSF}{National Science Foundation}
\newacro{BCD}{Block Coordinate Descent}
\newcommand{\eref}[1]{Eq.~(\ref{#1})}
\newcommand{\fref}[1]{Figure~\ref{#1}}
\newcommand{\tref}[1]{Table~\ref{#1}}
\newcommand{\sref}[1]{Section~\ref{#1}}
\newcommand{\aref}[1]{Algorithm~\ref{#1}}
\newcommand{\pref}[1]{Problem~(\ref{#1})}
\newcommand{\thmref}[1]{Theorem~\ref{#1}}
\newcommand{\lemmaref}[1]{Lemma~\ref{#1}}
\newcommand{\mycomment}[1] { \noindent {\small \textcolor{blue}{\textbf{Me:} \emph{#1} } } }
\newcommand{\gcacomment}[1] { \noindent {\small \textcolor{red}{\textbf{GCA:} \emph{#1} } } }
\newcommand{\congcomment}[1] { \noindent {\small \textcolor{cyan}{\textbf{Cong:} \emph{#1} } } }
\newcommand{\mycomment}[1] {}
\newcommand{\gcacomment}[1] {}
\newcommand{\congcomment}[1] {}
\begin{document}

\title{Reduced-Rank Local Distance Metric Learning \\ for k-NN Classification}


\author{Yinjie Huang \and Cong Li \and Michael Georgiopoulos \and Georgios C. Anagnostopoulos}


\thanks{Y. Huang, C. Li, M. Georgiopoulos was with Department of Electrical Engineering \& Computer Science, University of Central Florida, 4000 Central Florida Blvd, Orlando, Florida, 32816, USA}
\thanks{G. C. Anagnostopoulos was with Department of Electrical and Computer Engineering, Florida Institute of Technology, 150 W University Blvd, Melbourne, Florida, 32901, USA}

\date{Received: date / Accepted: date}

\maketitle

\begin{abstract}
	We propose a new method for local distance metric learning based on sample similarity as side information. These local metrics, which utilize conical combinations of metric weight matrices, are learned from the pooled spatial characteristics of the data, as well as the similarity profiles between the pairs of samples, whose distances are measured. The main objective of our framework is to yield metrics, such that the resulting distances between similar samples are small and distances between dissimilar samples are above a certain threshold. For learning and inference purposes, we describe a transductive, as well as an inductive algorithm; the former approach naturally befits our framework, while the latter one is provided in the interest of faster learning. Experimental results on a collection of classification problems imply that the new methods may exhibit notable performance advantages over alternative metric learning approaches that have recently appeared in the literature\footnote{A preliminary version of the work presented here has appeared in \cite{Huang2013}.}.
	
\end{abstract}


\section{Introduction}
\label{sec:Introduction}

Distance computations underlie many machine learning approaches with the \ac{KNN} decision rule for classification and the $k$-Means algorithm for clustering problems being the two most prominent examples. 
Such computations are often, if not mainly, performed using the ordinary Euclidean metric or a weighted variation of it, namely the Mahalanobis distance.  
However, employing fixed, global metrics, such as the ones just mentioned, for computing distances may not yield good results in all settings. This fact motivated many researchers to pursue data-driven approaches, in order to infer the best metric for a given problem (\eg\ \cite{Xing2002} and \cite{ShalevShwartz2004}). In successfully addressing this task, one needs to take into account the data's distributional characteristics and to take advantage of any \emph{side information} that may be available for the data. In general, such approaches are referred to as \emph{metric learning}. A typical instance of such an approach is to learn the weight matrix of the Mahalanobis metric, which occasionally we will refer to it simply as the metric. Equivalently, this task could be viewed as follows: a de-correlating linear transformation of the data is learned in the native space and Euclidean distances are computed in the range space of the learned linear transform (feature space). When dealing with a classification problem, a \ac{KNN} algorithm based on the learned metric is eventually employed to label samples.

Our work falls under the metric learning approaches for classification tasks, where the Mahalanobis metric is learned through the help of pair-wise sample similarities. By assumption, two samples will be similar, if they feature the same class label. The goal of similarity-based metric learning is to map similar samples close and to map dissimilar samples far apart in the feature space. After learning this metric, an eventual application of a \ac{KNN} decision rule exhibits improved performance over a direct application of the same rule using the Euclidean metric.

Many metric learning algorithms have been proposed and show significant improvements over the Euclidean \ac{KNN} rule. For example, in \cite{Xing2002}, the authors posed similarity-based metric learning as a convex optimization problem, which is employed in a clustering problem. A projected gradient ascent algorithm is utilized to optimize the problem. \cite{ShalevShwartz2004} described an online algorithm for supervised learning of metrics. Their algorithm is based on successive projections onto the positive semi-definite cone. They also offered a dual version of the algorithm which is able to incorporate kernel operators. Moreover, Neighborhood Components Analysis (NCA) \cite{Goldberger2004}, maximizes the leave-one-out performance on the training data based on stochastic nearest neighbors. Their classification model is non-parametric, making no assumptions about the shape of the class distributions. \cite{Chopra2005} built a system that maps images to points in a lower dimensional space so that these points lie closer, if the original images are similar. This model consists of two convolutional neural networks to address geometric distortions. Furthermore, Large Margin Nearest Neighbor (LMNN) \cite{Weinberger2006} is trying to learn the metric so that the $k$-nearest neighbors of each sample belong to the same class, while others are separated by a large margin. They cast their optimization as an instance of semi-definite programming. Finally, \cite{Davis2007} formulated the problem using information entropy and introduce Information Theoretic Metric Learning (ITML). ITML tries to minimize the differential relative entropy between two multivariate Gaussian distributions with distance metric constraints.  

\begin{figure*}[ht]
	\vskip 0.2in
	\begin{center}
		\centerline{\includegraphics[width=\textwidth]{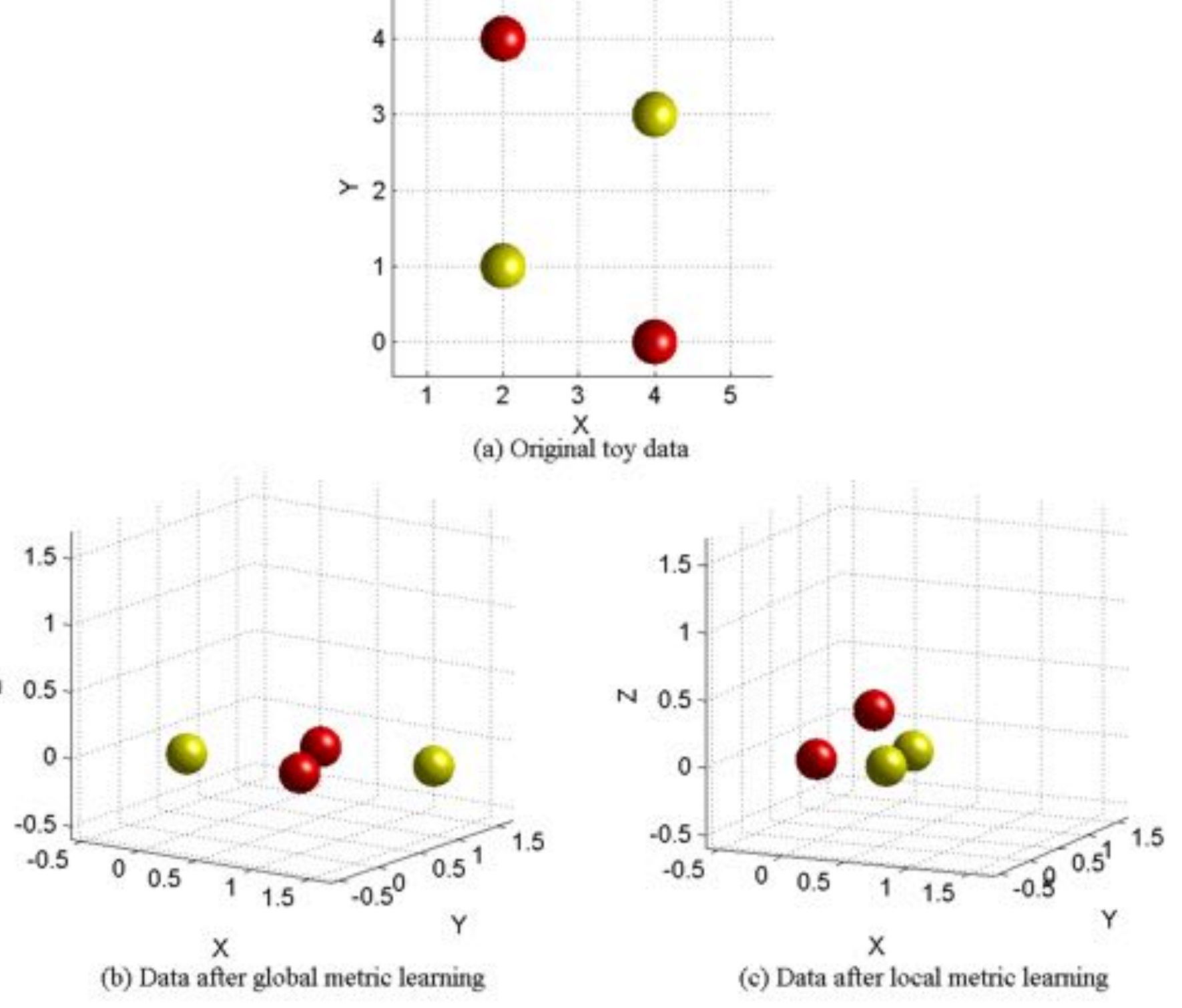}}
		\caption{Toy dataset that illustrates the potential advantages of learning a local metric instead of a global one. (a) Original data distribution. (b) Data distribution in the feature space obtained by learning a global metric. (c) Data distribution in the feature space obtained by learning similarity-based local metrics.}
		\label{figure1}
	\end{center}
	\vskip -0.2in
\end{figure*} 

The previous metric learning approaches share one common feature: they employ a single, global metric, \ie, a metric that is used for all distance computations. However, this global metric learning approach may not be well-suited to some multi-modal or non-linear scenarios. \fref{figure1} illustrates this point via a toy dataset containing $4$ samples from two classes. Note that this toy problem is merely a conceptual device that shows the comparison of what a global metric and local metrics will do. \fref{figure1}(a) shows the samples in their native space. \fref{figure1}(b) shows the feature space resulting from learning a global metric, while \fref{figure1}(c) shows the transformed data after learning two local metrics, which take into account the location and similarity characteristics of the data involved. We refer to such metrics as \emph{local metrics}. In contrast to the result obtained using a global metric, local metrics can map similar samples closer to each other, as shown in \fref{figure1}(c). This may potentially improve $1$-NN classification performance, when compared to the sample distributions in the other two cases.  

Many local metric learning algorithms have been proposed. In \cite{Hastie1996}, local metrics are determined from centroid information. The neighborhoods are shrank in directions that are orthogonal to the local decision boundaries, while elongated in directions parallel to the boundaries. In \cite{Bilenko2004}, the authors introduced a clustering framework, in which a local metric is defined for each cluster. \cite{Yang2006} proposed a local metric learning model that generates distance metrics to accommodate multiple modes for each class. Moreover, an Expectation-Maximization-like algorithm is employed to solve their probabilistic framework. In \cite{Weinberger2008}, the authors of LMNN  developed the LMNN-Multiple Metric (LMNN-MM) approach. When applied in a classification context, the number of metrics equals the number of classes. Additionally, \cite{Noh2010} proposed Generative Local Metric Learning (GLML), which learns local metrics through NN classification error minimization. GLML assumes that the data has been drawn from a Gaussian mixture, which is a rather strong assumption. Eventually, \cite{Wang2012} proposed Parametric Local Metric Learning (PLML), in which each local metric is defined in relation to an anchor point of the instance space. In order to solve their local metric problem, they employ a projected gradient method to optimize their large-margin objective. \cite{Zhu2014}'s model learns multiple distance metrics under different scales of the data and combine the decisions from these learned metrics. Finally, they formulated the local metric learning problem as a \ac{SVM} model.

In this paper, we propose a new local metric learning approach, which we will refer to as \ac{R2LML}. As elaborated in \sref{sec:ProblemFormulation}, in our approach, the local Mahalanobis metric (in specific, its weight matrix) is modeled as a conical combination of positive semi-definite weight matrices. With the assistance of pair-wise similarities, both the weight matrices and their coefficients are learned from the data. The weight matrices themselves correspond to local linear transformations of the original data from their native space into a locality-dependent feature space. These transformations are learned such that similar (dissimilar) samples map close to (far from) each other, so that they exhibit small (large) pair-wise Euclidean distances in these locally-defined feature spaces. Note that, in our case, we will consider samples to be similar, if they share the same label. Moreover, we will consider two variants of \ac{R2LML}. The first one, namely \ac{T-R2LML}, uses transductive learning \cite{Vapnik1998} to infer the test sample coefficients necessary for defining the local metrics. The second one, which is referred to as \ac{E-R2LML}, aims to address the computationally intensive nature of the first variant. As discussed in \sref{sec:ProblemFormulation}, it employs a technique first used in \cite{Wang2012}, according to which the coefficients of a test sample are set equal to the ones of its nearest (in terms of Euclidean distance) training sample. Finally, it is worth mentioning that both variants employ a sum-of-nuclear-norms regularizer to avoid over-fitting, when warranted. 


In order to optimize the aforementioned formulations, two efficient \ac{BCD} algorithms are presented in \sref{sec:Algorithm}. In specific, as delineated in \sref{sec:TwoStepAlgorithm}, a two-block minimization algorithm is able to solve the \ac{E-R2LML} learning problem. The first block minimization with respect to the weight matrices constitutes a \ac{PSD} step, which is able to cope with the non-smooth nature of the formulation's regularizer. The second block minimization, which attempts to optimize the metric coefficients, constitutes a straightforward \ac{MM} step. On the other hand, the algorithm intended for solving the \ac{T-R2LML} formulation differs from the first one in that it includes an additional block minimization with respect to the test samples' similarities. As shown in \sref{sec:ThreeStepAlgorithm}, the relevant optimization, while addressing a binary integer programming problem, can be efficiently performed. The convergence analysis for both methods is showcased in \sref{sec:Analysis}.    


Finally, in \sref{sec:Experiments}, the first experiment studies the importance of regularization in the proposed frameworks based on the synthetic datasets. Additionally, the relationship between the number of local metrics and the accuracies is highlighted in the second experiment. Eventually, we demonstrate the capabilities of \ac{T-R2LML} and \ac{E-R2LML} with respect to classification tasks. When compared to other recent global or local metric learning approaches, \ac{T-R2LML} and \ac{E-R2LML} achieve the highest classification accuracy in $9$ and $14$ out of $18$ datasets respectively. 


\section{Problem Formulation}
\label{sec:ProblemFormulation}

Define $\mathbb{N}_M \triangleq \{ 1, 2, \ldots , M\}$ for any positive integer $M$. Suppose we have $n$ input training set $\{ \boldsymbol{x}_n \in \mathbb{R}^{D} \}_{n \in \mathbb{N}_N}$ and an accompanying similarity matrix $\boldsymbol{S} \in \left\{0, 1\right\}^{N \times N}$ as side information, in which each entry represents a corresponding pair-wise sample similarity. If $\boldsymbol{x}_m$ and $\boldsymbol{x}_n$ are similar, then $s_{mn} = 1$; otherwise, then $s_{mn} = 0$. In a classification context, two samples from the same (or different) class can be naturally deemed similar (or dissimilar).

The Mahalanobis distance between two samples $\boldsymbol{x}_n$ and $\boldsymbol{x}_m$ is $d_{\boldsymbol{A}} (\boldsymbol{x}_m,\boldsymbol{x}_n) $ $\triangleq \sqrt{(\boldsymbol{x}_m-\boldsymbol{x}_n)^T\boldsymbol{A}(\boldsymbol{x}_m-\boldsymbol{x}_n)}$. We will refer to $\boldsymbol{A} \in \mathbb{R}^{D \times D}$ (a positive semi-definite matrix, denoted as $\boldsymbol{A} \succeq 0$) as the \emph{weight matrix of the metric}. When $\boldsymbol{A}=\boldsymbol{I}$, the previous metric, obviously, becomes the Euclidean distance metric. Since any positive semi-definite weight matrix can be expressed as $\boldsymbol{A}=\boldsymbol{L}^T\boldsymbol{L}$, where $\boldsymbol{L} \in \mathbb{R}^{P \times D}$ with $P \leq D$, the previously defined Mahalanobis distance can be expressed as $d_{\boldsymbol{A}}(\boldsymbol{x}_m,\boldsymbol{x}_n)=\left \| \boldsymbol{L}(\boldsymbol{x}_m-\boldsymbol{x}_n)\right \|_2$. This last expression implies that the Mahalanobis distance based on $\boldsymbol{A}$ between two points in the native space can be viewed as the Euclidean distance between the corresponding points in the feature space obtained through the linear transformation $\boldsymbol{L}$. 

Metric learning approaches are trying to learn $\boldsymbol{A}$ so to minimize the distances between pairs of similar points, while maximizing, or maintaining above a certain threshold, the distances between dissimilar points in the feature space. The problem can be formulated as follows:

\begin{align}
\label{eq:eqn01}
\underset{\boldsymbol{A} \succeq 0}{min} \ &  \ \underset{m,n}{\sum} s_{mn} d_{\boldsymbol{A}}(\boldsymbol{x}_m,\boldsymbol{x}_n) \\
s.t.             \ & \ \underset{m,n}{\sum} (1-s_{mn})d_{\boldsymbol{A}}(\boldsymbol{x}_m,\boldsymbol{x}_n) \geq 1. \nonumber
\end{align}

\noindent
\pref{eq:eqn01} is a semi-definite programming problem involving a global metric based on $\boldsymbol{A}$. Several approaches like LMNN, ITML and NCA are learning a single global metric. However, as argued earlier via \fref{figure1}, a global metric may not be advantageous under all circumstances. 

In this paper, we propose \ac{R2LML}, a new local metric approach. We assume that the metric involved is expressed as a conical combination of $K \geq 1$ Mahalanobis metrics. The metric between $\boldsymbol{x}_n$ and $\boldsymbol{x}_m$ is defined as $\sum_k \boldsymbol{A}^k g^k_m g^k_n$. Here, $\boldsymbol{g}^k \in \mathbb{R}^{N}$ is a vector for each local metric $k$, of which the $n^{th}$ element $g^k_n$ may be considered as a measure of how pertinent the $k$\textsuperscript{th} metric is, when computing distances involving the $n^{th}$ sample. 
Not only do these metrics change throughout the input space along the data\rq s underlying manifold, but are also affected by the similarity of nearby samples. Note that these coefficient vectors will be also unknown for test samples and, hence, need to be inferred as well. A natural avenue to achieve this is via a transductive learning scheme.  


The metric $\sum_k \boldsymbol{A}^k g^k_m g^k_n$ is actually a semi-metric \cite{Sefer2011}, which violates the triangle inequality. When choosing $\boldsymbol{g}$ properly, there exists triplets of samples that does not satisfy the triangle inequality in the feature space. However, in our experiments, it seems that a proper metric is almost always learned. For example, when considering the \textit{Pendigits} dataset (containing about $200$ samples), the triangle inequalities that we examined (over one million) were all satisfied. In the rest of our work, we still refer this semi-metric as metric for simplicity.

Transductive learning trains both labeled and unlabeled data to yield improved performance. According to \cite{Vapnik1998}, when solving a problem, one should avoid inferring a function as an intermediate step. There are many transductive learning approaches proposed for various algorithms. In \cite{Bennett1999}, \cite{Chen2002}, \cite{Gammerman2013} and \cite{Joachims1999}, the authors developed transductive learning framework for Support Vector Machine. \cite{Joachims2003} and \cite{Kukar2002} designed transductive algorithm for \ac{KNN} classifiers and general classifiers respectively. There are also transductive learning approaches for graph-based models in \cite{Talukdar2009}, \cite{Liu2009} and \cite{Zhou2007}.

In \ac{T-R2LML}, the input training set $\{ \boldsymbol{x}_n \in \mathbb{R}^{D} \}_{n \in \mathbb{N}_N}$ and test set $\{ \boldsymbol{x}_n \in \mathbb{R}^{D} \}_{n \in \mathbb{N}_M}$ are combined. Since labels of test samples are unknown, the entries of the similarity matrix $\boldsymbol{S} \in \left\{0, 1\right\}^{(N+M) \times (N+M)}$ that involve test data are randomly initialized. The vectors $\boldsymbol{g}^k$ belong to $\Omega^{'}_{g} \triangleq \left\{ \left\{ \boldsymbol{g}_k \right\}_{k \in \mathbb{N}_K} \in \left[0, 1\right]^{N+M} : \boldsymbol{g}^k \succeq \boldsymbol{0}, \ \sum_k \boldsymbol{g}^k = \boldsymbol{1} \right\}$, where '$\succeq$' denotes component-wise ordering. The $\boldsymbol{g}^k$s' need to sum up to the all-ones vector $\boldsymbol{1}$, so that at least one metric is relevant, when computing distances from each sample. Obviously, if $K=1$, $\boldsymbol{g}^1 = \boldsymbol{1}$, which amounts to learning a single global metric.

Based on the previous description, the weight matrix for each pair $(m,n)$ is defined as $\sum_k \boldsymbol{A}^k g^k_m g^k_n$. Note that the distance between every pair of points features a different weight matrix. We now consider the following formulation motivated by \pref{eq:eqn01}, which varies over $k \in \mathbb{K}$:

\begin{align}
\label{eq:eqn03}
\underset{\boldsymbol{L}^k \boldsymbol{S}, \boldsymbol{g}^k \in \Omega^{'}_{g}, \xi^k_{m,n} \geq 0}{min} &  \  \underset{k}{\sum} \underset{m,n}{\sum} s_{mn}\left \| \boldsymbol{L}^k \Delta \boldsymbol{x}_{mn} \right \|^2_2 g_n^k g_m^k + \\
\ & \ +C \underset{k}{\sum} \underset{m,n}{\sum} (1-s_{mn}) \xi^k_{mn} + \lambda \underset{k}{\sum} \mbox{rank} (\boldsymbol{L}^k) \nonumber \\
s.t. \ & \ \left \| \boldsymbol{L}^k \Delta \boldsymbol{x}_{mn} \right \|^2_2 \geq 1-\xi^k_{mn}, \nonumber \\
\ & \ m, n \in \mathbb{N}_{N+M}, \ k \in \mathbb{N}_K \nonumber \\
\ & \ s_{mn} \in \{0,1\}, \ m,n \in  \mathbb{N}_M \nonumber \\
\ & \ s_{mm}=1, \ s_{mn} = s_{nm}, \ m,n \in  \mathbb{N}_M  \nonumber \\
\ & \ \underset{n \in \mathbb{N}_{N+M}}{\sum s_{mn}}  \geq 2, \ m \in \mathbb{N}_M, \nonumber
\end{align}

\noindent
where $\Delta \boldsymbol{x}_{mn} \triangleq \boldsymbol{x}_m-\boldsymbol{x}_n$ and $\mbox{rank} (\boldsymbol{L}^k)$ denotes the rank of matrix $\boldsymbol{L}^k$. In the objective function, the first term attempts to minimize the distance between similar samples, while the second term along with the first set of soft constraints (due to the slack variables $\xi^k_{mn}$) encourage distances between pairs of dissimilar samples to be larger than $1$. 
Evidently, $C > 0$ controls the penalty of violating the previous prerequisite. Finally, the last term penalizes large ranks of the linear transformations $\boldsymbol{L}^k$. Therefore, the regularization parameter $\lambda \geq 0$ essentially controls the dimensionality of the feature space. As is typical for identifying good values for regularization parameters, both $C$ and $\lambda$ are chosen via a validation procedure. Note that the diagonal elements are all set to $1$ in the similarity matrix. Finally, the last constraint guarantees that the testing samples include all the labels of the training set. 

Via the use of the hinge function, $[u]_+ \triangleq \max \{u,0\}$ for all $u \in \mathbb{R}$, \pref{eq:eqn03} can be reformulated by eliminating the slack variables. Notice that $\mbox{rank} (\boldsymbol{L}^k)$ is a non-convex function \wrt\ $\boldsymbol{L}^k$ and, hence, is hard to optimize. Following the approaches of \cite{Cand`es2009} and \cite{Cand`es2008}, $\mbox{rank} (\boldsymbol{L}^k)$ can be replaced with its convex envelope, \ie, $\boldsymbol{L}^k$'s nuclear norm. The new problem is now formulated as:

\begin{align}
\label{eq:eqn04}
\underset{\boldsymbol{L}^k, \boldsymbol{S}, \boldsymbol{g}^k \in \Omega^{'}_{g}}{min} \ & \ \underset{k} {\sum} \underset{m,n}{\sum}  s_{mn}\left \| \boldsymbol{L}^k \Delta \boldsymbol{x}_{mn}\right  \|^2_2 g_n^k g_m^k + \\
\ & \ + C(1-s_{mn}) \left[ 1-\left \| \boldsymbol{L}^k \Delta \boldsymbol{x}_{mn} \right \|^2_2\right]_+ \nonumber \\
& + \lambda \underset{k}{\sum} \left \| \boldsymbol{L}^k \right \|_* \nonumber \\
s.t. \ & \ s_{mn} \in \{0,1\}, \ m,n \in  \mathbb{N}_M \nonumber \\
\ & \ s_{mm}=1, \ s_{mn} = s_{nm}, \ m,n \in  \mathbb{N}_M \nonumber \\
\ & \ \underset{n \in \mathbb{N}_{N+M}}{\sum s_{mn}}  \geq 2, \ m \in \mathbb{N}_M, \nonumber
\end{align}

\noindent
where $\left \| \cdot \right \|_*$ denotes the nuclear norm, in specific, $\left \| \boldsymbol{L}^k \right \|_* \triangleq \sum_{s=1}^P \sigma_s(\boldsymbol{L}^k)$, where $\sigma_s$ is a singular value of $\boldsymbol{L}^k$. 

A shortcoming of \ac{T-R2LML} is that, it is computationally intensive, since the computation of the gradient in each step requires $O(K(M + N)^2)$ operations and, typically, $M >> N$. Hence, we are also inclined to consider a faster, albeit approximate, approach to address our local metric learning problem. 
In specific, as done in \cite{Wang2012}, for each test sample $\boldsymbol{x}$, its $\boldsymbol{g}$ vector will be assigned the value of the corresponding vector associated to $\boldsymbol{x}$'s nearest (in terms of Euclidean distance) training sample. We refer to this model as \ac{E-R2LML} and its training only requires $O(KN^2)$ operations per step.

For \ac{E-R2LML}, $\boldsymbol{g}^k$ belongs to $\Omega_{g} \triangleq \left\{ \left\{ \boldsymbol{g}_k \right\}_{k \in \mathbb{N}_K} \in \left[0, 1\right]^N : \boldsymbol{g}^k \succeq \boldsymbol{0}, \ \sum_k \boldsymbol{g}^k = \boldsymbol{1} \right\}$ when considering only the training set. Finally, the problem becomes:

\begin{align}
\label{eq:R2LML01}
\underset{\boldsymbol{L}^k,\boldsymbol{g}^k \in \Omega_{g}}{min} \ & \ \underset{k} {\sum} \underset{m,n}{\sum}  s_{mn}\left \| \boldsymbol{L}^k \Delta \boldsymbol{x}_{mn}\right  \|^2_2 g_n^k g_m^k + \\
\ & \ + C(1-s_{mn}) \left[ 1-\left \| \boldsymbol{L}^k \Delta \boldsymbol{x}_{mn} \right \|^2_2\right]_+ + \lambda \underset{k}{\sum} \left \| \boldsymbol{L}^k \right \|_*. \nonumber 
\end{align}

\section{Algorithm}
\label{sec:Algorithm}

\pref{eq:R2LML01} and \pref{eq:eqn04} reflect minimizations over two and three sets of variables respectively. In \ac{E-R2LML}, for fixed $\boldsymbol{g}^k$, the problem is non-convex \wrt\ $\boldsymbol{L}^k$, since the second term in \eref{eq:R2LML01} is the combination of a convex function (hinge function) and a non-monotone function \wrt\ $\boldsymbol{L}^k$, namely $1-\left \| \boldsymbol{L}^k \Delta \boldsymbol{x}_{mn} \right \|^2_2$. On the other hand, the problem is also non-convex w.r.t $\boldsymbol{g}^k$ for fixed $\boldsymbol{L}^k$, since the similarity matrix $\boldsymbol{S}$ is almost always indefinite, which will be argued in the sequel. Thus, the objective function may have multiple minima and an iterative procedure to minimize it may have to be initialized multiple times with different values for the unknown parameters in order to find a good solution. The same observations apply to \ac{T-R2LML} as well. Finally, notice that, for \ac{T-R2LML}, when optimizing \pref{eq:eqn04} \wrt \ $\boldsymbol{S}$, while holding $\boldsymbol{g}^k$ and $\boldsymbol{L}^k$ fixed, the problem under consideration is convex. In what follows next, we discuss two training algorithms: a two-block \ac{BCD} algorithm for \ac{E-R2LML} and a very similar \ac{BCD} algorithm for \ac{T-R2LML} that can perform the optimizations in question.

\subsection{Two-Block Algorithm for \ac{E-R2LML}}
\label{sec:TwoStepAlgorithm}

We first start off with a discussion of the \ac{BCD} that trains the \ac{E-R2LML} framework. For the first block, we try to solve for every $\boldsymbol{L}^k$ by holding the $\boldsymbol{g}^k$'s fixed. In this case, \pref{eq:R2LML01} becomes an unconstrained minimization problem, which can be expressed in the form $f(\boldsymbol{w})+r(\boldsymbol{w})$, where $\boldsymbol{w}$ is the parameter we are trying to minimize over (in our case, all $\boldsymbol{L}^k$'s). $f(\boldsymbol{w})$ is the non-differentiable hinge loss function, while $r(\boldsymbol{w})$ is a non-smooth, convex regularization term. 
Hence, we resort to using a \ac{PSD} method in a similar fashion as has been done in \cite{Rakotomamonjy2011} and \cite{Chen2009}. It might be worth noting that the particular approach is a special case of the one presented in \cite{Duchi2009}. It is this relationship that we leverage to develop the convergence analysis of our \ac{PSD} steps in \sref{sec:Analysis}.

Next, for the second block we minimize \wrt\ each $\boldsymbol{g}^k$ vector, while the $\boldsymbol{L}^k$'s are assumed to be fixed. Consider a matrix $\boldsymbol{\bar{S}}^k$ associated to the $k^{th}$ metric, whose $(m,n)$ element is defined as: 

\begin{equation}
\label{eq:eqn05}
\bar{s}^k_{mn} \triangleq s_{mn}\left \| \boldsymbol{L}^k \Delta \boldsymbol{x}_{mn} \right \| ^2_2, \ \ \ m, n \in \mathbb{N}_N.
\end{equation}       \\

\noindent
Then, by concatenating all individual $\boldsymbol{g}^k$ vectors into a single vector $\boldsymbol{g} \in \mathbb{R}^{KN}$ and by defining the block-diagonal matrix $\boldsymbol{\tilde{S}}$ as:

\begin{align}
\label{eq:eqn07}
\boldsymbol{\tilde{S}} \triangleq \begin{bmatrix}
\boldsymbol{\bar{S}}^1 	& 0 & ...& 0 \\ 
0         			& \boldsymbol{\bar{S}}^2 	&...      & 0 \\ 
\vdots    							& \vdots    							& \ddots  &\vdots \\ 
0         							& ...       							&0        & \boldsymbol{\bar{S}}^K
\end{bmatrix}
\in \mathbb{R}^{KN \times KN}.
\end{align} 

\noindent
\pref{eq:R2LML01} can be expressed as:

%

\begin{align}
\label{eq:eqn08}
\underset{\boldsymbol{g \in \Omega_g}}{min} \ & \ \boldsymbol{g}^T \boldsymbol{\tilde{S}} \boldsymbol{g},
\end{align}

\noindent
where $\Omega_g = \left\{ \boldsymbol{g} \in \left[ 0, 1 \right]^{KN} : \boldsymbol{g} \succeq \boldsymbol{0}, \ \boldsymbol{B} \boldsymbol{g} = \boldsymbol{1} \right\}$, $ \boldsymbol{B} \triangleq \boldsymbol{1}^T \otimes \boldsymbol{I}_N$ and $\otimes$ denotes the Kronecker product. \pref{eq:eqn08} is non-convex, since $\boldsymbol{\tilde{S}}$ is almost always indefinite. This stems from the fact that $\boldsymbol{\tilde{S}}$ is a block diagonal matrix, whose blocks are Euclidean Distance Matrices (EDMs). EDMs feature exactly one positive eigenvalue (unless all of them equal to $0$). Since each EDM is a hollow matrix, its trace equals to $0$, which implies that its remaining eigenvalues must be negative \cite{Balaji2007}. Therefore, $\boldsymbol{\tilde{S}}$ will feature negative eigenvalues.

In order to minimize \pref{eq:eqn08}, we employ a \ac{MM} approach \cite{Hunter2004}, which requires first identifying a function of $\boldsymbol{g}$ that majorizes the objective function at hand. Let $\mu \triangleq -\lambda_{max}(\boldsymbol{\tilde{S}})$, where $\lambda_{max}(\boldsymbol{\tilde{S}})$ is the largest eigenvalue of $\boldsymbol{\tilde{S}}$. Since $\boldsymbol{\tilde{S}}$ is indefinite, $\lambda_{max}(\boldsymbol{\tilde{S}}) > 0$. Then, $\boldsymbol{H} \triangleq \boldsymbol{\tilde{S}}+\mu \boldsymbol{I}$ is negative semi-definite. Let $q(\boldsymbol{g}) \triangleq \boldsymbol{g}^T\boldsymbol{\tilde{S}}\boldsymbol{g}$ be the cost function of \eref{eq:eqn08}. Note that $(\boldsymbol{g}-\boldsymbol{g}')^T\boldsymbol{H}(\boldsymbol{g}-\boldsymbol{g}') \leq 0$ for any $\boldsymbol{g}$ and $\boldsymbol{g}'$ and we have that $q(\boldsymbol{g}) < -\boldsymbol{g}'^T \boldsymbol{H} \boldsymbol{g}'+2\boldsymbol{g}'^T\boldsymbol{H} \boldsymbol{g}-\mu \left \| \boldsymbol{g} \right \|^2_2$ for all $\boldsymbol{g} \neq \boldsymbol{g}'$ and equality, only if $\boldsymbol{g} = \boldsymbol{g}'$. The right hand side of the aforementioned inequality constitutes $q$'s majorizing function, denoted as $q(\boldsymbol{g}|\boldsymbol{g}')$. The majorizing function is used to iteratively optimize $\boldsymbol{g}$ based on the current estimate $\boldsymbol{g}'$. So we have the following minimization problem, which is convex w.r.t $\boldsymbol{g}$:

\begin{align}
\label{eq:eqn09}
\underset{\boldsymbol{g} \in \Omega_g}{min} \ & \ 2\boldsymbol{g}'^T \boldsymbol{H} \boldsymbol{g} -\mu \left \| \boldsymbol{g} \right \|^2_2.
\end{align}

\noindent
This problem is readily solvable, as the next theorem implies.

\begin{thm}
	\label{thm1}
	Let $\boldsymbol{g}, \boldsymbol{d} \in \mathbb{R}^{KN}$, $\boldsymbol{B} \triangleq \boldsymbol{1}^T \otimes \boldsymbol{I}_N \in \mathbb{R}^{N \times KN}$ and $c>0$. The unique minimizer $\boldsymbol{g}^{*}$ of
	
	\begin{align}
	\label{eq:general}
	\underset{\boldsymbol{g}}{min} \ & \ \frac{c}{2} \left \| \boldsymbol{g} \right \|^2_2 + \boldsymbol{d}^T \boldsymbol{g} \\
	s.t. \ & \  \boldsymbol{B}\boldsymbol{g}=\boldsymbol{1}, \ \boldsymbol{g} \succeq \boldsymbol{0}, \nonumber
	\end{align}
	
	\noindent
	has the form
	
	\begin{equation}
	\label{eq:eqn10}
	g^{*}_i = \frac{1}{c} \left[ (\boldsymbol{B}^T\boldsymbol{\alpha})_i-\boldsymbol{d}_i \right]_+, \ \ \ i \in \mathbb{N}_{KN}, 
	\end{equation}
	
	\noindent
	where $g_i$ is the $i^{th}$ element of $\boldsymbol{g}$ and $\boldsymbol{\alpha} \in \mathbb{R}^N$ is the Lagrange multiplier vector associated to the equality constraint.
\end{thm}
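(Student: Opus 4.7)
My plan is to derive the closed-form minimizer through the \ac{KKT} conditions. The objective in \pref{eq:general} is a strictly convex quadratic in $\boldsymbol{g}$, since its Hessian equals $c\boldsymbol{I}$ with $c>0$, while the feasible set defined by $\boldsymbol{B}\boldsymbol{g}=\boldsymbol{1}$ together with $\boldsymbol{g}\succeq\boldsymbol{0}$ is convex, closed, and non-empty (for example, $g_i = 1/K$ is feasible). Coupled with coercivity, these facts guarantee the existence of a unique minimizer $\boldsymbol{g}^{*}$; because all constraints are affine, the \ac{KKT} conditions are both necessary and sufficient for optimality.

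Next, I would introduce Lagrange multipliers $\boldsymbol{\alpha}\in\mathbb{R}^{N}$ for the equality constraint and $\boldsymbol{\beta}\in\mathbb{R}_{+}^{KN}$ for the non-negativity constraints, and form the Lagrangian
\begin{equation*}
\mathcal{L}(\boldsymbol{g},\boldsymbol{\alpha},\boldsymbol{\beta}) = \frac{c}{2}\left\|\boldsymbol{g}\right\|_{2}^{2} + \boldsymbol{d}^{T}\boldsymbol{g} + \boldsymbol{\alpha}^{T}(\boldsymbol{1}-\boldsymbol{B}\boldsymbol{g}) - \boldsymbol{\beta}^{T}\boldsymbol{g}.
\end{equation*}
Setting $\nabla_{\boldsymbol{g}}\mathcal{L}=\boldsymbol{0}$ yields $c g_{i} + d_{i} - (\boldsymbol{B}^{T}\boldsymbol{\alpha})_{i} - \beta_{i} = 0$ for every $i \in \mathbb{N}_{KN}$. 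A short case analysis using complementary slackness $\beta_{i}\, g_{i}^{*}=0$ together with the sign conditions $g_{i}^{*}\geq 0$ and $\beta_{i}\geq 0$ produces two possibilities for each coordinate: either $g_{i}^{*}>0$, in which case $\beta_{i}=0$ and $g_{i}^{*}=(1/c)\left[(\boldsymbol{B}^{T}\boldsymbol{\alpha})_{i}-d_{i}\right]>0$, or $g_{i}^{*}=0$, in which case $\beta_{i}=d_{i}-(\boldsymbol{B}^{T}\boldsymbol{\alpha})_{i}\geq 0$. Both alternatives coincide exactly with the compact expression \eref{eq:eqn10}.

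The main subtlety is to guarantee the existence of an $\boldsymbol{\alpha}$ for which the above formula simultaneously satisfies $\boldsymbol{B}\boldsymbol{g}^{*}=\boldsymbol{1}$. I would dispatch this point by observing that $\boldsymbol{B}=\boldsymbol{1}^{T}\otimes\boldsymbol{I}_{N}$ is the horizontal concatenation of $K$ copies of $\boldsymbol{I}_{N}$ and therefore has full row rank $N$, so the linear independence constraint qualification holds at every feasible point and a \ac{KKT} multiplier vector is guaranteed to exist. Uniqueness of $\boldsymbol{g}^{*}$ then follows from strict convexity, and uniqueness of $\boldsymbol{\alpha}$ additionally follows from the full row rank of $\boldsymbol{B}$. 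Formalizing the existence of $\boldsymbol{\alpha}$ is the one nontrivial ingredient; once granted, the theorem reduces to the short \ac{KKT} manipulation sketched above.
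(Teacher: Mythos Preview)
Your proposal is correct and follows essentially the same route as the paper: form the Lagrangian, apply the stationarity condition $c g_i + d_i - (\boldsymbol{B}^T\boldsymbol{\alpha})_i - \beta_i = 0$, and eliminate $\beta_i$ via complementary slackness to obtain the hinge-function form. Your version is in fact a bit more careful than the paper's, since you explicitly verify existence and uniqueness of the minimizer and invoke LICQ to guarantee the existence of $\boldsymbol{\alpha}$, whereas the paper simply writes down the Lagrangian and performs the same two-case analysis without those preliminaries.
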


\begin{proof}
	The Lagrangian of \pref{eq:general} is formulated as:
	
	\begin{equation}
	\label{eq:A2}
	L(\boldsymbol{g},\boldsymbol{\alpha},\boldsymbol{\beta}) = \frac{c}{2} \boldsymbol{g}^T \boldsymbol{g} + \boldsymbol{d}^T \boldsymbol{g} + \boldsymbol{\alpha}^T (\boldsymbol{1}-\boldsymbol{B}\boldsymbol{g})-\boldsymbol{\beta}^T \boldsymbol{g},
	\end{equation}
	
	\noindent
	where $\boldsymbol{\alpha} \in \mathbb{R}^{N}$ and $\boldsymbol{\beta} \in \mathbb{R}^{KN}$ with $\boldsymbol{\beta} \succeq \boldsymbol{0}$ are Lagrange multiplier vectors. If we set the partial derivative of $L(\boldsymbol{g},\boldsymbol{\alpha},\boldsymbol{\beta})$ with respect to $\boldsymbol{g}$ to $\boldsymbol{0}$, we readily have
	
	\begin{equation}
	\label{eq:A3}
	g_i = \frac{1}{c} \left( (\boldsymbol{B}^T\boldsymbol{\alpha})_i + \beta_i - d_i \right), \ \ \ i \in \mathbb{N}_{KN}.
	\end{equation}
	
	\noindent
	Let $\gamma_i \triangleq (\boldsymbol{B}^T\boldsymbol{\alpha})_i - d_i$. Combining \eref{eq:A3} with the complementary slackness condition $\beta_i g_i=0$, one obtains that, if $\gamma_i \leq 0$, then $\beta_i = - \gamma_i$ and $g_i = 0$, while, when $\gamma_i > 0$, then $\beta_i = 0$ and, evidently, $g_i = \frac{1}{c} \gamma_i$. These two observations can be summarized as $g_i = \frac{1}{c} \left[ \gamma_i \right]_{+}$, which completes the proof. 
\end{proof}

\noindent
In order to exploit the result of \thmref{thm1} for obtaining a concrete solution to \pref{eq:eqn09}, a binary search is employed to find the (unknown) optimal values of the Lagrange multipliers $\alpha_i$, so they satisfy the equality constraint $\boldsymbol{B}\boldsymbol{g}=\boldsymbol{1}$.



In conclusion, the entire algorithm for solving \pref{eq:R2LML01} is depicted in \aref{alg:alg2} and can be recapitulated as follows: for the first block, the $\boldsymbol{g}^k$ vectors are assumed fixed and a \ac{PSD} step is employed to minimize the cost function of \eref{eq:R2LML01} \wrt\ each weight matrix $\boldsymbol{L}^k$. In the second block, all $\boldsymbol{L}^k$'s are held fixed to the values obtained from step $1$ and the solution offered by \thmref{thm1} along with binary search solutions for the $\alpha_i$'s are used to compute the optimal $\boldsymbol{g}_k$'s by iteratively solving \pref{eq:eqn09} via a \ac{MM} scheme. These two main blocks are repeated until convergence. 

\subsection{The Three-Block Algorithm Variant for \ac{T-R2LML}}
\label{sec:ThreeStepAlgorithm}

The first two \ac{BCD} steps of \ac{T-R2LML} are identical to the ones of \ac{E-R2LML}. However, since \ac{T-R2LML} embodies a trasductive learning approach, a third \ac{BCD} step is required, in order to predict the similarities between all samples, including the ones used for testing. In specific, for the third block optimization, \pref{eq:eqn04} is minimized over $\boldsymbol{S}$ for fixed $\boldsymbol{L}^k$'s and $\boldsymbol{g}^k$'s. 
By defining 

\begin{align}
\psi_{mn} & \triangleq \sum_k \left( \left \| \boldsymbol{L}^k \Delta \boldsymbol{x}_{mn} \right \|^2_2 g_n^k g_m^k - C[1-\left \| \boldsymbol{L}^k \Delta \boldsymbol{x}_{mn}\right \|^2_2]_{+} \right),
\end{align}

\noindent
\pref{eq:eqn04} becomes:

\begin{align}
\label{eq:R2LMTL02}
\underset{\boldsymbol{S}}{min} \ & \ \mbox{trace}\left\lbrace \boldsymbol{S} \boldsymbol{\Psi} \right\rbrace \\
s.t. \ & \ s_{mn} \in \{0,1\}, \ m,n \in  \mathbb{N}_M \nonumber \\
\ & \ s_{mm}=1, \ s_{mn} = s_{nm}, \ m,n \in  \mathbb{N}_M \nonumber \\
\ & \ \underset{n \in \mathbb{N}_{N+M}}{\sum s_{mn}}  \geq 2, \ m \in \mathbb{N}_M. \nonumber
\end{align}

\noindent
where $\boldsymbol{\Psi} \in \mathbb{R}^{M \times N}$ is the matrix with elements $\psi_{mn}$. This is a $0-1$ integer programming problem. By scanning the matrix $\boldsymbol{\Psi}$ row by row, \pref{eq:R2LMTL02} will be optimally solved using the following rules:

\begin{itemize}
	\item For rows of $\boldsymbol{\Psi}$ containing at least one negative element, set the corresponding $s_{mn}$ element(s) to $1$; the remaining elements are set to $0$. 
	\item For rows of $\boldsymbol{\Psi}$ with no negative element, the $s_{mn}$ element, which corresponds to the smallest $\psi_{mn}$, is set to $1$; the remaining elements are set to $0$.
	\item Note that $s_{nm}$ must equal $s_{mn}$, since the matrix $\boldsymbol{S}$ is symmetric.
\end{itemize}

\begin{algorithm}[tb]
	\caption{Minimization of \pref{eq:eqn04} and \pref{eq:R2LML01}}
	\label{alg:alg2}
	\begin{algorithmic}
		\STATE {\bfseries Input:} Data $\boldsymbol{X} \in R^{D \times (N+M) }$ for \pref{eq:eqn04} and $\boldsymbol{X} \in R^{D \times N }$ for \pref{eq:R2LML01},  number of metrics $K$
		\STATE {\bfseries Output:} $\boldsymbol{L}^k, \boldsymbol{g}^k, \boldsymbol{S}$ (here, $\boldsymbol{S}$ is only for \pref{eq:eqn04} ) 
		\STATE 01. Initialize $\boldsymbol{L}^k, \boldsymbol{g}^k, \boldsymbol{S}$ for all $k \in \mathbb{N}_K$
		\STATE 02. {\bfseries While} not converged {\bfseries Do}
		\STATE 03.\quad Block 1: Use a \ac{PSD} method to solve \pref{eq:eqn04} for each $\boldsymbol{L}^k$
		\STATE 04.\quad Block 2:
		\STATE 05.\quad \quad $\boldsymbol{\tilde{S}} \leftarrow \eref{eq:eqn07}$
		\STATE 06.\quad \quad $\mu\leftarrow -\lambda_{max}(\boldsymbol{\tilde{S}})$
		\STATE 07.\quad \quad $\boldsymbol{H} \leftarrow \boldsymbol{\tilde{S}}+\mu \boldsymbol{I}$
		\STATE 08.\quad \quad {\bfseries While} not converged {\bfseries Do}
		\STATE 09.\quad \quad \quad Apply binary search to obtain each $\boldsymbol{g}^k$ using \eref{eq:eqn10}
		\STATE 10.\quad \quad {\bfseries End While}
		\STATE 11.\quad Block 3 (this block only for \pref{eq:eqn04}): Optimal Algorithm for \pref{eq:R2LMTL02}
		\STATE 11. {\bfseries End While}
	\end{algorithmic}
\end{algorithm}

\noindent
For the sake of completeness, the relevant algorithm is summarized in \aref{alg:alg2}. Note that these three main blocks are repeated until a preset maximum number of steps is reached. 


\subsection{Analysis}
\label{sec:Analysis}

In this subsection, we investigate the convergence of our proposed \aref{alg:alg2}. This is a local analysis since our framework is non-convex. As mentioned in previous sections, a PSD approach is used to minimize the function $f(\boldsymbol{w})+r(\boldsymbol{w})$, where both $f \triangleq \sum_k \sum_{m,n}  s_{mn}\left \| \boldsymbol{L}^k \Delta \boldsymbol{x}_{mn}\right  \|^2_2 g_n^k g_m^k + C(1-s_{mn}) \left[ 1-\left \| \boldsymbol{L}^k \Delta \boldsymbol{x}_{mn} \right \|^2_2\right]_+$ and $r \triangleq \sum_k \left \| \boldsymbol{L}^k \right \|_*$ are non-differentiable. Denote $\partial f$ as the subgradient of $f$ and define $\left \| \partial f(\boldsymbol{w}) \right \| \triangleq \sup_{\boldsymbol{g} \in \partial f(\boldsymbol{w})} \left \| \boldsymbol{g} \right \|_2$; the corresponding quantities for $r$ are similarly defined. Like in \cite{Langford2009} and \cite{ShalevShwartz2011}, the subgradients are assumed to be bounded, \ie:

\begin{equation}
\label{eq:eqn11}
\left \| \partial f(\boldsymbol{w}) \right \|^2 \leq Af(\boldsymbol{w})+G^2, \ \ \ \left \| \partial r(\boldsymbol{w}) \right \|^2 \leq Ar(\boldsymbol{w})+G^2,
\end{equation}

\noindent
where $A$ and $G$ are positive scalars. Let $\boldsymbol{w}^*$ be the minimizer of $f(\boldsymbol{w})+r(\boldsymbol{w})$. Then we have the following theorem for the problem under consideration.

\begin{thm}
	\label{lem2}
	Suppose that a \ac{PSD} method is employed to solve $\min_{\boldsymbol{w}} \{ f(\boldsymbol{w}) + r(\boldsymbol{w}) \}$. Assume that 1) $f$ and $r$ are lower-bounded; 2) the norms of any subgradients $\partial f$ and $\partial r$ are bounded as in \eref{eq:eqn11}; 3) $\left \| \boldsymbol{w}^* \right \| \leq D$ for some $D>0$; 4) $r(\boldsymbol{0})=0$. Let $\eta_t \triangleq \frac{D}{\sqrt{8T}G}$, where $T$ is the number of iterations of the PSD algorithm. Then, for a constant $c \leq 4$, such that $(1-cA\frac{D}{\sqrt{8T}D}) > 0$, and initial estimate of the solution $\boldsymbol{w}_1=\boldsymbol{0}$, we have:

	\begin{align}
	\label{eq:eqn12}
	\underset{t \in \mathbb{N}_T}{min} & \left[ f(\boldsymbol{w}_t)+  r(\boldsymbol{w}_t) \right] \leq  \frac{1}{T}  \sum_{t=1}^T f(\boldsymbol{w}_t)+r(\boldsymbol{w}_t) \leq \nonumber \\
	& \leq \frac{2\sqrt{2}DG}{\sqrt{T}(1-\frac{cAD}{G\sqrt{8T}})}+\frac{f(\boldsymbol{w}^*)+r(\boldsymbol{w}^*)}{1-\frac{cAD}{G\sqrt{8T}}}.
	\end{align}
\end{thm}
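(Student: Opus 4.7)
The plan is to obtain a per-step descent-type inequality for the proximal subgradient update, apply the self-bounding property in (11) to the two subgradient-norm terms that appear, move the resulting $f(\boldsymbol{w}_t)+r(\boldsymbol{w}_t)$ terms to the left-hand side, then average over $t=1,\ldots,T$. I would follow the same template used in Langford 2009 and Shalev-Shwartz 2011 for self-bounded stochastic/subgradient methods, adapted to the composite (subgradient on $f$, proximal on $r$) scheme of Duchi 2009 that the paper has already identified as the underlying algorithm.

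First I would write the PSD update as $\boldsymbol{w}_{t+1}=\arg\min_{\boldsymbol{w}}\{\tfrac{1}{2}\|\boldsymbol{w}-(\boldsymbol{w}_t-\eta_t\boldsymbol{g}_t^f)\|_2^2+\eta_t r(\boldsymbol{w})\}$, with $\boldsymbol{g}_t^f\in\partial f(\boldsymbol{w}_t)$. The first-order optimality condition of this prox step yields a subgradient $\boldsymbol{g}_{t+1}^r\in\partial r(\boldsymbol{w}_{t+1})$ with $\boldsymbol{w}_{t+1}=\boldsymbol{w}_t-\eta_t(\boldsymbol{g}_t^f+\boldsymbol{g}_{t+1}^r)$. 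Expanding $\|\boldsymbol{w}_{t+1}-\boldsymbol{w}^*\|_2^2$, applying the two subgradient inequalities $\langle\boldsymbol{g}_t^f,\boldsymbol{w}_t-\boldsymbol{w}^*\rangle\ge f(\boldsymbol{w}_t)-f(\boldsymbol{w}^*)$ and $\langle\boldsymbol{g}_{t+1}^r,\boldsymbol{w}_{t+1}-\boldsymbol{w}^*\rangle\ge r(\boldsymbol{w}_{t+1})-r(\boldsymbol{w}^*)$, and cleaning up the cross terms gives the standard recursion
\begin{equation*}
\|\boldsymbol{w}_{t+1}-\boldsymbol{w}^*\|_2^2 \le \|\boldsymbol{w}_t-\boldsymbol{w}^*\|_2^2 - 2\eta_t\bigl[f(\boldsymbol{w}_t)+r(\boldsymbol{w}_t)-f(\boldsymbol{w}^*)-r(\boldsymbol{w}^*)\bigr] + \eta_t^2\bigl(\|\boldsymbol{g}_t^f\|_2^2+\|\boldsymbol{g}_{t+1}^r\|_2^2\bigr),
\end{equation*}
where passing between $r(\boldsymbol{w}_t)$ and $r(\boldsymbol{w}_{t+1})$ in the bracket is absorbed into the quadratic residual using $r(\boldsymbol{0})=0$ and the lower-boundedness of $r$ so that the sign works out in our favor.

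Next I would invoke (11) to bound $\|\boldsymbol{g}_t^f\|_2^2\le A f(\boldsymbol{w}_t)+G^2$ and $\|\boldsymbol{g}_{t+1}^r\|_2^2\le A r(\boldsymbol{w}_{t+1})+G^2\le A r(\boldsymbol{w}_t)+G^2$ (again using a small cross-term exchange), collecting the self-bounded terms on the right into a multiple of $f(\boldsymbol{w}_t)+r(\boldsymbol{w}_t)$. Moving this multiple to the left gives
\begin{equation*}
\bigl(2\eta_t-c\eta_t^2 A\bigr)\bigl[f(\boldsymbol{w}_t)+r(\boldsymbol{w}_t)\bigr] \le \|\boldsymbol{w}_t-\boldsymbol{w}^*\|_2^2 - \|\boldsymbol{w}_{t+1}-\boldsymbol{w}^*\|_2^2 + 2\eta_t\bigl[f(\boldsymbol{w}^*)+r(\boldsymbol{w}^*)\bigr] + 2\eta_t^2 G^2,
\end{equation*}
where the constant $c\le 4$ accumulates the various bookkeeping factors in the two self-bound applications. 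Summing over $t=1,\ldots,T$ telescopes the distance-to-optimum terms, leaving only $\|\boldsymbol{w}_1-\boldsymbol{w}^*\|_2^2\le D^2$ since $\boldsymbol{w}_1=\boldsymbol{0}$.

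Finally, substituting the constant stepsize $\eta_t=\eta=D/(G\sqrt{8T})$ turns the factor on the left into $2\eta T(1-cAD/(G\sqrt{8T}))$, by hypothesis strictly positive, and the right-hand-side quadratic residual $2T\eta^2 G^2$ becomes $D^2/4$. Dividing through by $2\eta T(1-cAD/(G\sqrt{8T}))$ and using $\min_t [f(\boldsymbol{w}_t)+r(\boldsymbol{w}_t)]\le \tfrac{1}{T}\sum_t [f(\boldsymbol{w}_t)+r(\boldsymbol{w}_t)]$ yields exactly (12) after simplifying $D^2/(2\eta T)+\eta G^2=2\sqrt{2}DG/\sqrt{T}$ (up to the absolute constant). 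The main obstacle I expect is the accounting in the two places where $r$ is evaluated at $\boldsymbol{w}_{t+1}$ rather than $\boldsymbol{w}_t$: those mismatches must be controlled either by the strong convexity of the prox subproblem or by absorbing the extra $\eta_t^2\|\boldsymbol{g}_{t+1}^r\|_2^2$ into the self-bound at $\boldsymbol{w}_t$ at the cost of the constant $c$, and getting these manipulations to produce the clean denominator $1-cAD/(G\sqrt{8T})$ with $c\le 4$ is where the proof has to be careful.
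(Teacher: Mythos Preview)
Your overall plan---derive a one-step recursion from the prox optimality condition, apply the self-bound (11), telescope, and substitute $\eta=D/(G\sqrt{8T})$---is exactly what the paper does. But the place you flagged as ``the main obstacle'' is where your proposal actually breaks, and neither of your two suggested fixes is the right one.

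Concretely, the two per-step conversions you attempt are not valid. You write the recursion with $r(\boldsymbol{w}_t)$ in the bracket and claim the passage from $r(\boldsymbol{w}_{t+1})$ to $r(\boldsymbol{w}_t)$ ``is absorbed into the quadratic residual using $r(\boldsymbol{0})=0$''; but $r(\boldsymbol{0})=0$ says nothing about step $t>1$, so this cannot be done locally. Likewise your chain $\|\boldsymbol{g}^r_{t+1}\|^2\le A r(\boldsymbol{w}_{t+1})+G^2\le A r(\boldsymbol{w}_t)+G^2$ asserts $r(\boldsymbol{w}_{t+1})\le r(\boldsymbol{w}_t)$, which is simply false in general for a proximal subgradient method. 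The paper avoids both issues by \emph{not} converting at each step: its per-step lemma keeps $r(\boldsymbol{w}_{t+1})$ on the left, giving
\[
2\eta(1-cA\eta)\bigl[f(\boldsymbol{w}_t)+r(\boldsymbol{w}_{t+1})\bigr]\le 2\eta\bigl[f(\boldsymbol{w}^*)+r(\boldsymbol{w}^*)\bigr]+\|\boldsymbol{w}_t-\boldsymbol{w}^*\|^2-\|\boldsymbol{w}_{t+1}-\boldsymbol{w}^*\|^2+8\eta^2 G^2,
\]
and only \emph{after} summing over $t$ performs the index shift $\sum_{t=1}^T r(\boldsymbol{w}_{t+1})=\sum_{t=1}^T r(\boldsymbol{w}_t)-r(\boldsymbol{w}_1)+r(\boldsymbol{w}_{T+1})$, discarding the boundary terms via $r(\boldsymbol{w}_1)=r(\boldsymbol{0})=0$ and $(1-cA\eta)r(\boldsymbol{w}_{T+1})\ge 0$. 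That is where assumptions~1) and~4) actually enter.

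A secondary point: your one-step recursion is too clean. Expanding $\|\boldsymbol{w}_{t+1}-\boldsymbol{w}^*\|^2$ produces $\eta^2\|\boldsymbol{g}^f_t+\boldsymbol{g}^r_{t+1}\|^2$ (with a cross term), and since $\boldsymbol{g}^r_{t+1}\in\partial r(\boldsymbol{w}_{t+1})$ you must rewrite $\langle\boldsymbol{g}^r_{t+1},\boldsymbol{w}_t-\boldsymbol{w}^*\rangle=\langle\boldsymbol{g}^r_{t+1},\boldsymbol{w}_{t+1}-\boldsymbol{w}^*\rangle-\langle\boldsymbol{g}^r_{t+1},\boldsymbol{w}_{t+1}-\boldsymbol{w}_t\rangle$ to invoke convexity of $r$ at the right point; the extra inner product is then bounded by Cauchy--Schwarz and (11). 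Tracking all of these contributions (and using $\max\{f(\boldsymbol{w}_t),r(\boldsymbol{w}_{t+1})\}\le f(\boldsymbol{w}_t)+r(\boldsymbol{w}_{t+1})$) is what produces the residual $8\eta^2 G^2$ rather than your $2\eta^2 G^2$, and coefficients $5/2,\,7/2$ in front of the self-bounded terms that are then majorized by $c\le 4$.
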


\noindent
The detailed proof of \thmref{lem2} is given in the Appendix A. \thmref{lem2} implies that, as $T$ grows, the \ac{PSD} iterates approach $\boldsymbol{w}^*$. 

\begin{thm}
	\label{thm3}
	\aref{alg:alg2} yields a convergent, non-increasing sequence of cost function values relevant to \pref{eq:R2LML01} and \pref{eq:eqn04}. Furthermore, the set of fixed points of the iterative map embodied by \aref{alg:alg2} include the \ac{KKT} points of \pref{eq:R2LML01} and \pref{eq:eqn04}.
\end{thm}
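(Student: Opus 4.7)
The plan has two independent parts: (i) establishing monotonicity and convergence of the cost-value sequence, and (ii) showing that every KKT point of the joint problem is a fixed point of one sweep of Algorithm~\ref{alg:alg2}.

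\textbf{Part (i): Non-increasing cost and convergence.} I would bound the change of the objective across each of the three inner block updates and then invoke monotone convergence. For Block~2, the majorization inequality $q(\boldsymbol{g})\le q(\boldsymbol{g}\mid\boldsymbol{g}')$ with equality at $\boldsymbol{g}=\boldsymbol{g}'$, together with the exact minimization of the convex surrogate in \pref{eq:eqn09} (made tractable by \thmref{thm1} plus the binary search over the Lagrange multipliers), yields $q(\boldsymbol{g}_{t+1})\le q(\boldsymbol{g}_t)$ at each inner MM step; hence the entire Block~2 pass is non-increasing. Block~3 exactly solves the combinatorial subproblem \pref{eq:R2LMTL02} in closed form via the row-scan rules, so it cannot increase the cost. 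For Block~1, I would appeal to \thmref{lem2}: by choosing $\eta_t$ as prescribed and retaining the best PSD iterate encountered, the bound \eref{eq:eqn12} certifies that the returned $\boldsymbol{L}^k$ has cost no larger than the incumbent (a standard safeguard for subgradient methods on non-convex cost). Summing the three inequalities over one outer sweep produces a non-increasing sequence. Since the objective is bounded below (the squared Mahalanobis distances, the hinge slack, and the nuclear-norm regularizer are all non-negative), monotone convergence yields a finite limit.

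\textbf{Part (ii): $\mathrm{KKT}\subseteq\mathrm{FixedPoints}$.} Let $(\boldsymbol{L}^{\star},\boldsymbol{g}^{\star},\boldsymbol{S}^{\star})$ be a KKT point of \pref{eq:eqn04}; the argument for \pref{eq:R2LML01} is identical after omitting Block~3. I will verify separately that each block leaves the triple unchanged, which establishes the desired inclusion. Block~1: The KKT stationarity with respect to each $\boldsymbol{L}^k$ implies $\boldsymbol{0}\in\partial_{\boldsymbol{L}^k}f(\boldsymbol{L}^{\star})+\partial_{\boldsymbol{L}^k}r(\boldsymbol{L}^{\star})$, so there exists $\boldsymbol{v}\in\partial r(\boldsymbol{L}^{\star})$ with $-\boldsymbol{v}\in\partial f(\boldsymbol{L}^{\star})$. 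The PSD update takes the form $\boldsymbol{L}_{\mathrm{new}}=\mathrm{prox}_{\eta r}(\boldsymbol{L}^{\star}-\eta(-\boldsymbol{v}))=\mathrm{prox}_{\eta r}(\boldsymbol{L}^{\star}+\eta\boldsymbol{v})$, which equals $\boldsymbol{L}^{\star}$ by the defining identity of the proximal operator applied to $\boldsymbol{v}\in\partial r(\boldsymbol{L}^{\star})$. Block~2: At $\boldsymbol{g}^{\star}$ the joint KKT conditions project onto the KKT conditions of $\min_{\boldsymbol{g}\in\Omega_g}q(\boldsymbol{g})$. Because the majorizer is tangent to $q$ at $\boldsymbol{g}^{\star}$, one has $\nabla q(\boldsymbol{g}^{\star}\mid\boldsymbol{g}^{\star})=\nabla q(\boldsymbol{g}^{\star})$, so $\boldsymbol{g}^{\star}$ also satisfies the KKT conditions of the convex programme $\min_{\boldsymbol{g}\in\Omega_g}q(\boldsymbol{g}\mid\boldsymbol{g}^{\star})$. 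Convexity then forces the unique minimizer delivered by \thmref{thm1} (together with the binary search over $\boldsymbol{\alpha}$) to be $\boldsymbol{g}^{\star}$ itself. Block~3: With $\boldsymbol{L}^{\star},\boldsymbol{g}^{\star}$ fixed, KKT optimality with respect to $\boldsymbol{S}$ implies $\boldsymbol{S}^{\star}$ already attains the minimum of the integer programme \pref{eq:R2LMTL02}; the row-scan rules return that minimum, hence $\boldsymbol{S}^{\star}$.

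\textbf{Main obstacle.} The delicate point is Block~1: $f$ is non-convex in $\boldsymbol{L}^k$ because of the hinge composed with $1-\|\boldsymbol{L}^k\Delta\boldsymbol{x}_{mn}\|_2^2$, so \thmref{lem2} provides only a one-sided bound on the best iterate rather than strict monotone descent. Reconciling this with the descent required by the outer BCD forces the ``keep-the-best PSD iterate'' safeguard described above; additionally, the proximal-operator identity used in the fixed-point argument must be invoked in the subdifferential sense for the non-smooth, non-convex $f$ and the convex, non-smooth $r$. The remaining MM and combinatorial pieces are routine BCD bookkeeping.
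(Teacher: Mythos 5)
Your proposal follows essentially the same route as the paper's Appendix B: blockwise non-increase (the MM sandwich inequality for Block 2, exact solution of the integer subproblem for Block 3, and the \ac{PSD} analysis of \thmref{lem2} for Block 1) combined with lower-boundedness of the objective to get convergence, followed by a block-by-block check that a \ac{KKT} point is left unchanged by one sweep. Your two refinements --- the keep-the-best-iterate safeguard for the non-convex \ac{PSD} block, where \thmref{lem2} by itself only bounds the best iterate against $\boldsymbol{w}^*$ rather than guaranteeing descent from the warm-started incumbent, and the explicit proximal fixed-point identity in place of the paper's bare appeal to $\boldsymbol{0}\in\partial_{\boldsymbol{L}^k}f_{0}$ --- tighten steps the paper treats more loosely but do not alter the overall argument.
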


The proof is showcased in Appendix B. \thmref{thm3} implies the convergence of the two proposed algorithms.

\section{Experiments}
\label{sec:Experiments}

\subsection{Effects of our nuclear norm-based regularization}

\begin{table*}[t]
	\setlength{\tabcolsep}{5pt}
	\caption{Classification accuracy versus $\lambda$ for highly overlapping dataset.}
	\label{table0}
	\vskip 0.15in
	\begin{center}
		\begin{small}
			\begin{sc}
				\begin{tabular}{lcccccc}
					\hline
					\noalign{\smallskip}
					$\lambda$  	& $0$ 		& $0.1$ 	& $1$ 		& $10$ 		& $100$ 	& $1000$ \\
					\noalign{\smallskip}
					\hline
					\noalign{\smallskip}
					Accuracy    & $0.544$   & $0.528$	& $0.553$	& $0.563$	& $0.563$ 	& $0.534$	\\
					\hline
				\end{tabular}
			\end{sc}
		\end{small}
	\end{center}
	\vskip -0.1in
\end{table*} 

\begin{table*}[t]
	\setlength{\tabcolsep}{5pt}
	\caption{Classification accuracy versus $\lambda$ for highly overlapping dataset with sparse features. The number of columns with all $0$ entries for each metric is also reported.}
	\label{table0.5}
	\vskip 0.15in
	\begin{center}
		\begin{small}
			\begin{sc}
				\begin{tabular}{lcccccccc}
					\hline
					\noalign{\smallskip}
					$\lambda$  								& $0$ 		& $0.1$ 	& $1$ 		& $10$ 		& $10^2$ 	& $10^3$		& $10^4$		& $10^5$ \\
					\noalign{\smallskip}
					\hline
					\noalign{\smallskip}
					Accuracy            					& $0.725$   & $0.813$	& $0.747$	& $0.713$	& $0.725$	& $0.975$	& $0.916$	& $0.488$	\\
					\noalign{\smallskip}
					\hline
					\noalign{\smallskip}
					\# of zero columns in Metric $1$      	& $0$		& $0$ 		& $0$ 		& $0$		& $0$		& $13$		& $13$		& $0$ \\
					\# of zero columns in Metric $2$    	& $0$ 		& $0$ 		& $0$	 	& $0$     	& $0$		& $13$		& $13$		& $60$	\\
					\# of zero columns in Metric $3$     	& $0$ 		& $0$ 		& $0$	 	& $0$		& $0$		& $13$		& $14$		& $60$	\\
					\hline
				\end{tabular}
			\end{sc}
		\end{small}
	\end{center}
	\vskip -0.1in
\end{table*} 

Since \ac{T-R2LML} involves $(M-1)M/2 + M N + (D^2 + M + N) K$ parameters, while \ac{E-R2LML} employs $(D^2 + N) K$, we can see that both \ac{R2LML} frameworks may benefit from regularization, when confronted with scarce, high-dimensional, noisy data. 
Two synthetic datasets were created to study the effects of nuclear norm regularization. 

The first set consisted of $30$-dimensional samples, while the second one consisted of $60$-dimensional features. In both cases, samples were drawn from a mixture of two highly overlapping Gaussian distributions, whose covariance matrices had a spectral radius of $0.3$. Moreover, in the case of the second dataset, features randomly selected with probability $0.5$ were set to $0$ to emulate sparsity.
For both datasets, $80$ samples were used for training via \ac{E-R2LML} and $320$ samples for testing. 
Also, $3$ local metrics were employed, while the remaining parameters were set as follows: the \ac{PSD} step length was set to $10^{-6}$ and the algorithm was allowed to run for $5$ epochs of $500$ iterations each. The classification accuracy using a $5$-nearest neighbor search is reported in \tref{table0} and \tref{table0.5} for various values of the regularization parameters.


These two tables reflect, as expected, that the regularization proves to be very important for \ac{E-R2LML}, and, by extension, to \ac{T-R2LML} as well, since the latter one deals with additional parameters to be learned. More specifically, it is shown that cross-validation over $\lambda$ is essential in improving classification accuracy for noisy, potentially sparse, highly overlapping data. This is especially more pronounced for the second dataset, where not employing regularization is clearly inferior to the performance attained by fine-tuning $\lambda$. Also, for the same dataset, \tref{table0.5} illustrates the sparsity-inducing properties of the nuclear norm regularizer. It is worth noting that, although not specifically shown here, the metrics' all-$0$ columns obtained for $\lambda = 10^3$ and $\lambda = 10^4$ followed exactly the sparsity pattern of the relevant features.


\subsection{Real datasets}

In order to assess the utility of the proposed models, we performed experiments on $18$ datasets, namely, \textit{Robot Navigation}, \textit{Letter Recognition}, \textit{Pendigits}, \textit{Wine Quality}, \textit{Gamma Telescope}, \textit{Ionosphere}, \textit{Breast Tissue}, \textit{Glass}, \textit{Heart}, \textit{Sonar}, \textit{WPBC}, \textit{Optdigits} and \textit{Isolet} datasets from the \textit{UCI machine learning repository}\footnote{\url{http://archive.ics.uci.edu/ml/datasets.html}}, and \textit{Image Segmentation}, \textit{Two Norm}, \textit{Ring Norm} datasets from the \textit{Delve Dataset Collection}\footnote{\url{http://www.cs.toronto.edu/~delve/data/datasets.html}}. We also considered the \textit{Columbia University Image Library (COIL20)}\footnote{\url{http://www.cs.columbia.edu/CAVE/software/softlib/coil-20.php}} and \textit{USPS}\footnote{\url{http://www.gaussianprocess.org/gpml/data/}} datasets. Major characteristics of these datasets are summarized in \tref{table1}. 
Following experimental settings similar to the ones used in \cite{Wang2012} and \cite{Zhu2014}, PCA was used on the data of \textit{COIL20}, \textit{Isolet}, \textit{Optdigits} and \textit{USPS} to reduce their number of features to $30$, as shown in \tref{table1}.

We first explored how the performance of \ac{T-R2LML}\footnote{\url{https://github.com/yinjiehuang/R2LMTL/archive/master.zip}} and \ac{E-R2LML}\footnote{\url{https://github.com/yinjiehuang/R2LML/archive/master.zip}} varies with respect to the number of local metrics. Then, we compared \ac{T-R2LML} and \ac{E-R2LML} to other state-of-the-art global and local metric learning algorithms, namely, ITML, LMNN, LMNN-MM, GLML and PLML.


\begin{table*}[t]
	\setlength{\tabcolsep}{5pt}
	\caption{Details of benchmark datasets. The columns indicate number of features (\#D), classes (\#classes), number of validation (\#validation) and test (\#test) samples.}
	\label{table1}
	\vskip 0.15in
	\begin{center}
		\begin{small}
			\begin{sc}
				\begin{tabular}{lcccccc}
					\hline
					\noalign{\smallskip}
					& \#D & \#classes & \#train & \#validation & \#test \\
					\noalign{\smallskip}
					\hline
					\noalign{\smallskip}
					A. Robot            	& $4$   & $4$	& $240$	 & $240$	& $4976$	\\
					B. Letter       		& $16$	& $26$ 	& $520$	 & $2600$	& $2600$	\\
					C. Pendigits    		& $16$ 	& $10$ 	& $400$	 & $2000$   & $2000$	\\
					D. Wine Quality     	& $12$ 	& $2$ 	& $150$	 & $150$	& $2898$	\\
					E. Telescope        	& $10$ 	& $2$ 	& $300$  & $300$    & $5400$	\\
					F. Image Segmentation   & $18$ 	& $7$ 	& $210$  & $210$    & $1890$	\\
					G. Two Norm          	& $20$ 	& $2$ 	& $250$	 & $250$    & $3900$	\\ 
					H. Ring Norm         	& $20$ 	& $2$   & $250$  & $250$    & $3900$	\\
					I. Ionosphere		 	& $34$  & $2$   & $80$	 & $50$     & $221$ 	\\
					J. Breast Tissue 		& $9$	& $6$	& $18$	 & $18$		& $70$	\\
					K. COIL20				& $30$	& $20$	& $400$	 & $400$	& $640$	\\
					L. Glass				& $9$	& $6$ 	& $18$	 & $18$		& $178$ 	\\
					M. Heart				& $13$	& $2$ 	& $40$ 	 & $40$		& $190$ 	\\
					N. Isolet				& $30$	& $26$	& $520$	 & $3640$	& $3637$ 	\\
					O. Optdigits			& $32$	& $10$	& $400$	 & $2400$	& $2820$ 	\\
					P. Sonar				& $60$	& $2$	& $40$	 & $40$		& $180$ 	\\
					Q. USPS					& $30$	& $10$	& $400$	 & $4500$	& $4398$ 	\\
					R. WPBC					& $33$	& $2$	& $20$	 & $20$		& $158$ 	\\
					\hline
				\end{tabular}
			\end{sc}
		\end{small}
	\end{center}
	\vskip -0.1in
\end{table*}

\begin{figure*}[ht]
	\vskip 0.2in
	\begin{center}
		\centerline{\includegraphics[width=\textwidth]{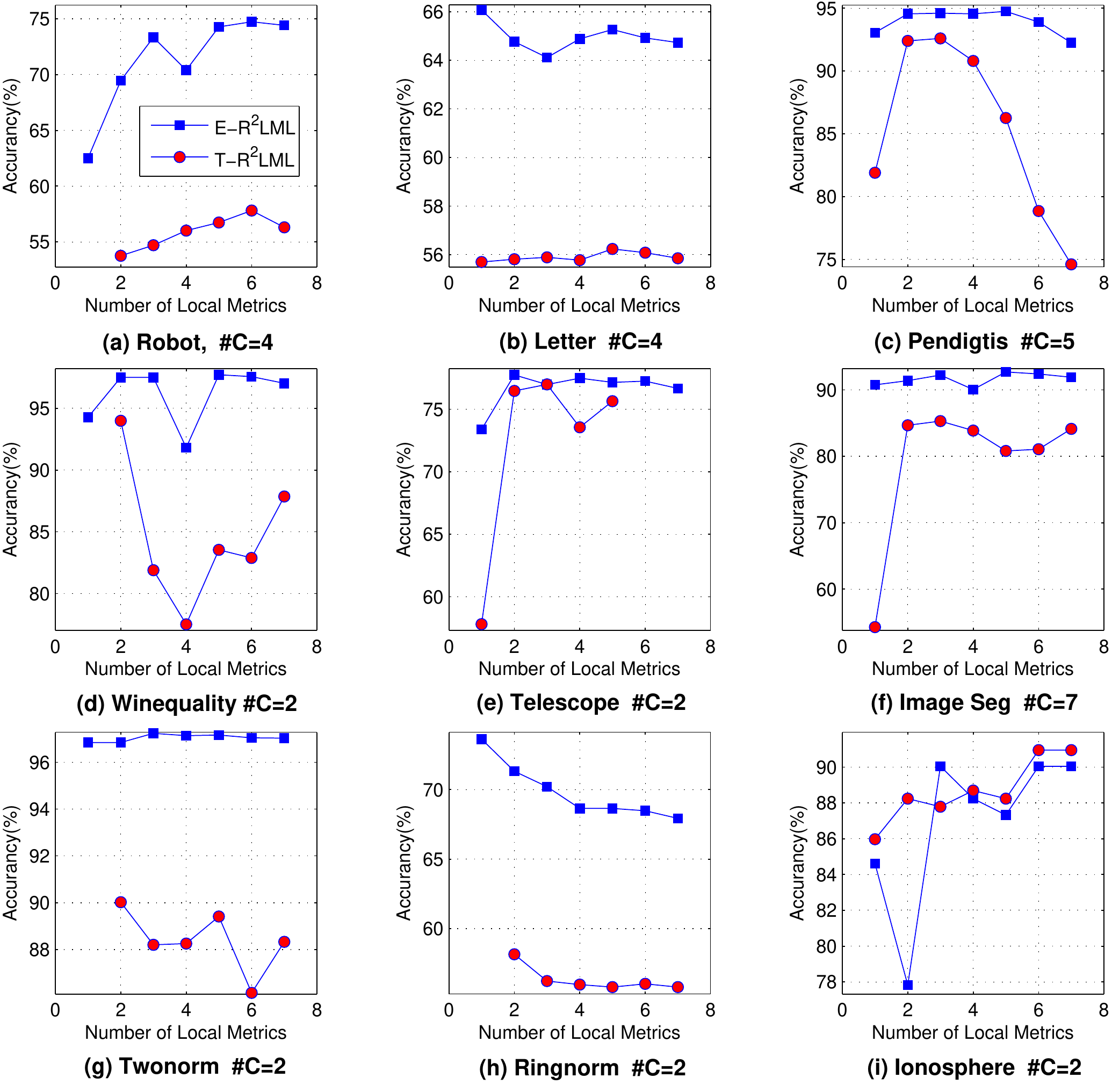}}
		\caption{\ac{T-R2LML} and \ac{E-R2LML} classification accuracy results on the first $9$ benchmark datasets for varying number $K$ of local metrics. $\#C$ indicates the number of classes of each dataset.}
		\label{figure3}
	\end{center}
	\vskip -0.2in
\end{figure*}

\subsubsection{Number of local metrics for \ac{T-R2LML} and \ac{E-R2LML}}
\label{sec:Num_R2LMTL}

One aspect that was investigated is how the performances of \ac{T-R2LML} and \ac{E-R2LML} vary with respect to the number of local metrics $K$. In \cite{Weinberger2008}, the authors set $K$ equal to the number of classes for each dataset, which might not necessarily be the optimal choice. An abundance of data may imply that more local metrics may be necessary for improved performance; this is an aspect we examined for \ac{T-R2LML} and \ac{E-R2LML}. 
For all datasets, the range of $K$ we considered was $1-7$, which, aside from \emph{COIL20}, \emph{USPS} and \emph{Isolet}, included the number of classes represented in the data. As we will argue in the sequel, the optimal $K$ does not necessarily coincide with the number of classes of the corresponding classification problem. As a matter of fact, it coincides only in roughly one quarter of the cases.  

\begin{figure*}[ht]
	\vskip 0.2in
	\begin{center}
		\centerline{\includegraphics[width=\textwidth]{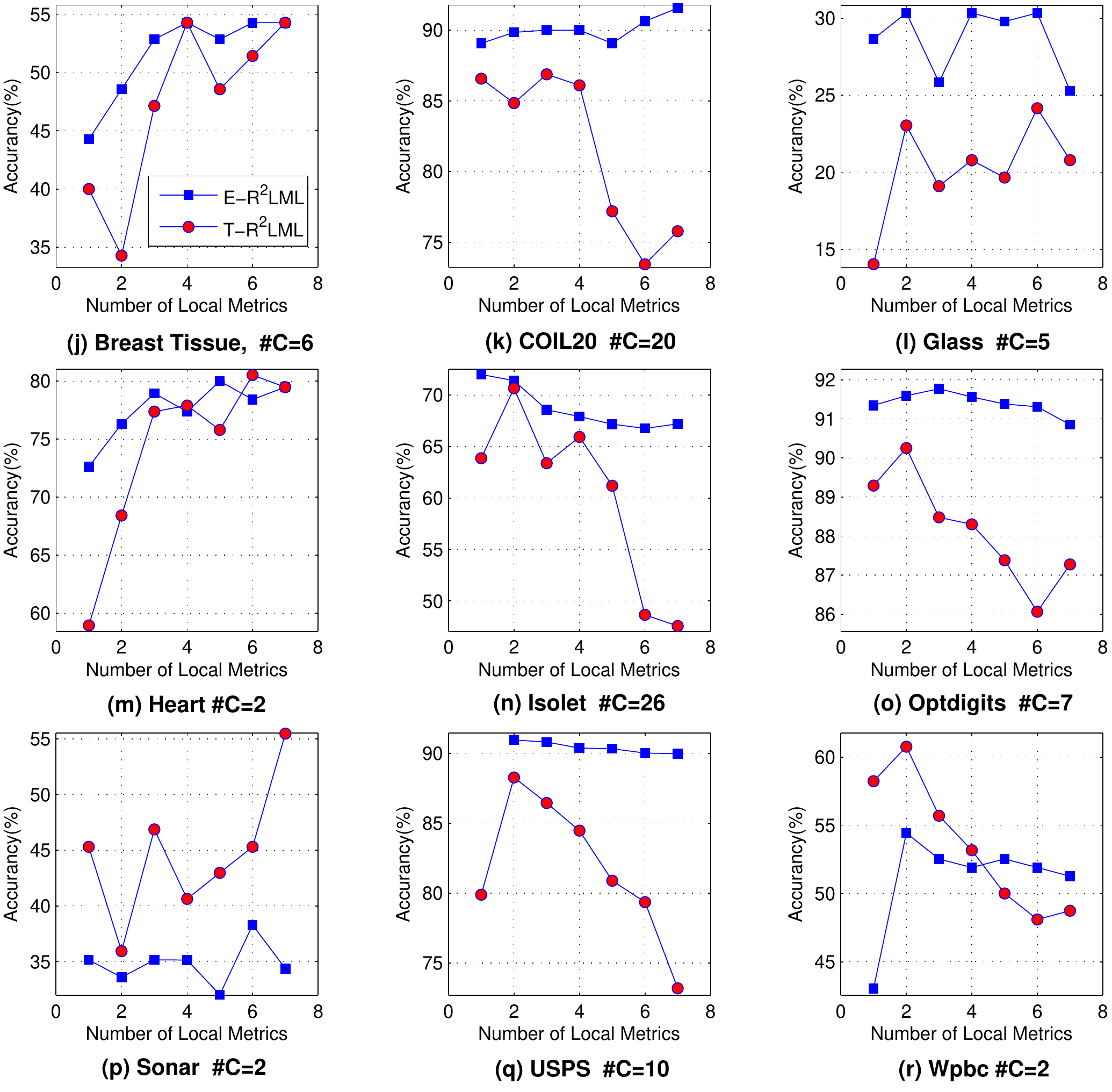}}
		\caption{\ac{T-R2LML} and \ac{E-R2LML} classification accuracy results for the remaining $9$ benchmark datasets as the number $K$ of local metrics varies. $\#C$ indicates the number of classes of each dataset.}
		\label{figure4}
	\end{center}
	\vskip -0.2in
\end{figure*}

For \ac{T-R2LML}, we set the penalty parameter $C$ to $1$ and the regularization parameter $\lambda$  to $10$. 
In the case of \ac{E-R2LML}, $\lambda$ was chosen smaller, since it employs less parameters compared to \ac{T-R2LML} and, therefore, is less prone to over-fitting. 
Note that all aforementioned parameter values were selected via cross-validation and subsequently held fixed. Moreover, we terminated our algorithm, if it reached $5$ epochs or when the difference in cost function values between two consecutive iterations was less than $10^{-4}$. In each epoch, the \ac{PSD} was ran for $500$ iterations with step length 
$10^{-5}$ for the \textit{Sonar} dataset, to $10^{-6}$ for the \textit{Ionosphere} and \textit{Glass} datasets, to $10^{-8}$ for the \textit{Ring Norm} dataset, $10^{-9}$ for the \textit{Robert}, \textit{Letter}, \textit{Two Norm} and \textit{Heart} datasets, $10^{-10}$ for the \textit{COIL20}, \textit{Isolet}, \textit{Optdigits} and \textit{USPS} datasets, to $10^{-11}$ for the \textit{Pendigits}, \textit{Image Segmentation}, \textit{Telescope}, \textit{Wine Quality} and \textit{Wpbc} datasets and $10^{-13}$ for the \textit{Breast Tissue} dataset. The \ac{MM} loop was terminated, if the number of iterations reached $3000$ or when the difference in cost function values between two consecutive iterations was less than $10^{-3}$.

For \ac{E-R2LML}, the parameters like $C$, the number of epochs and the number of iterations were set the same as \ac{T-R2LML}. 
The \ac{PSD} step length was fixed to $10^{-3}$ for the \textit{Glass} and \textit{Sonar} datasets, to $10^{-5}$ for the \textit{Robot} and \textit{Ionosphere} datasets, to $10^{-6}$ for the \textit{Letter}, \textit{Two Norm}, \textit{Ring Norm} and \textit{Optdigits} datasets, to $10^{-7}$ for the \textit{Isolet} and \textit{USPS} datasets, to $10^{-8}$ for the \textit{Wine Quality}, \textit{Image Segmentations}, \textit{COIL20} and \textit{Heart} datasets, to $10^{-9}$ for the \textit{Pendigits}, \textit{Gamma Telescope} and \textit{Wpbc} datasets and to $10^{-11}$ for the \textit{Breast Tissue} dataset. 

The relation between number of local metrics and classification accuracy for each dataset is reported in \fref{figure3} and \fref{figure4}. Several observations can be made based on these results. 
First, the results indicate that training with more data does not necessarily imply that an increased value of $K$ is needed for improved performance results. For example, in the case of \ac{T-R2LML}, for the \textit{Pendigits}, \textit{Wine Qulity}, \textit{Two Norm}, \textit{Ring Norm}, \textit{Glass}, \textit{Isolet} and \textit{Optdigits} datasets, $2$ local metrics are enough to yield the best results among other choices of $K$. When \ac{E-R2LML} is trained with the \textit{Telescope} and \textit{USPS} datasets, superior results are obtained using only $2$ metrics. Secondly, one cannot discern a deterministic relationship between the classification accuracy and the number of local metrics utilized that is suitable for all datasets. For the \textit{Ring Norm} dataset, the classification accuracy is monotonically decreasing with respect to $K$, while for the remaining datasets, the optimal $K$ varies in a non-apparent fashion with respect to their number of classes. All these observations suggest that validation over $K$ is needed to select the best performing model. Also, one discerns that, although \ac{T-R2LML} is trained with more data, \ac{E-R2LML} outperforms it on all datasets except the \textit{Telescope}, \textit{Ionosphere}, \textit{Breast Tissue}, \textit{Heart} and \textit{Wpbc} datasets. 
Finally, from the obtained results results, it becomes apparent that, using both \ac{R2LML} variants as local metric learning methods (when $K>1$) is, more often than not, advantageous compared to the case, when they are used with a single global metric (when $K=1$); this is most prominently exhibited in the case of the \textit{Heart}, \textit{Wpbc}, \textit{Ionosphere} and \textit{Telescope} datasets.

%

\subsubsection{Performance Comparisons} 
\label{sec:comp}

We compared \ac{T-R2LML} and \ac{E-R2LML} to several other metric learning algorithms, including Euclidean metric \ac{KNN}, ITML \cite{Davis2007}, LMNN \cite{Weinberger2006}, LMNN-MM \cite{Weinberger2008}, GLML \cite{Noh2010} and PLML \cite{Wang2012}. Both ITML and LMNN learn a global metric, while LMNN-MM, GLML and PLML are local metric learning algorithms. After the metrics are learned for each method, a $5$-nearest neighbor decision rule was employed to classify unlabeled samples. 

For our experiments we used LMNN, LMNN-MM\footnote{\url{http://www.cse.wustl.edu/~kilian/code/code.html}},  ITML\footnote{\url{http://www.cs.utexas.edu/~pjain/itml/}} and PLML\footnote{\url{http://cui.unige.ch/~wangjun/papers/PLML.zip}} implementations that were available online. For ITML, a good value of $\gamma$ was found via cross-validation. Also, for LMNN and LMNN-MM, the number of attracting neighbors during training was set to $1$ as suggested in the paper. Additionally, for LMNN, at most $500$ iterations were performed and $30\%$ of training data were used as a validation set. The maximum number of iterations for LMNN-MM was set to $50$ and a step size of $10^{-7}$ was used. For GLML, we chose the optimal $\gamma$ setting via cross-validation. Finally, the PLML hyper-parameter values were chosen as in \cite{Wang2012}, while $\alpha_1$ was chosen via cross-validation. For \ac{T-R2LML}, the value of the regularization parameter $\lambda$ was cross-validated over $\{10^{-1}, 1, 10^{1},..., 10^{6}, 10^{7}\}$. The other parameters values used were set as described in \sref{sec:Num_R2LMTL}. With respect to \ac{E-R2LML}, the regularization parameter $\lambda$ was chosen via a validation procedure over the set $\{10^{-2}, 10^{-1}, 1, 10^{1}, 10^{2}\}$. The remaining parameter settings of our methods were the same as the ones used in the previous experiments. Finally, for both methods, $K$, the number of metrics, is cross-validated over $\{1, 2, ..., 7\}$.   

\begin{table*}[t]
	\setlength{\tabcolsep}{5pt}
	\caption{Percent accuracy results of $8$ algorithms on $18$ benchmark datasets. For each dataset, the statistically best and comparable results for a family-wise significance level of $0.05$ are highlighted in boldface. All algorithms are ranked from best to worst; algorithms share the same rank, if their performance is statistically comparable.}
	\label{table2}
	\vskip 0.15in
	\begin{center}
		\begin{small}
			\begin{sc}
				\scalebox{0.9}
				{
					\begin{tabular}{ccccccccc}
						\hline
						\noalign{\smallskip}
						& Euclidean &ITML & LMNN & LMNN-MM & GLML & PLML & \ac{T-R2LML} & \ac{E-R2LML} \\
						\noalign{\smallskip}
						\hline
						\noalign{\smallskip}
						A & $65.31^{2nd}$ & $65.86^{2nd}$ & $66.10^{2nd}$ & $66.10^{2nd}$ & $62.28^{3rd}$ & $61.03^{3rd}$ & $58.72^{4th}$ & $\textbf{74.16}^{1st}$\\
						B & $51.42^{3dr}$ & $\textbf{63.92}^{1st}$ & $\textbf{64.73}^{1st}$ & $\textbf{64.73}^{1st}$ & $57.15^{2nd}$ & $\textbf{64.62}^{1st}$ & $57.19^{2nd}$ & $\textbf{66.96}^{1st}$\\
						C & $93.15^{2nd}$ & $92.80^{2nd}$ & $93.55^{2nd}$ & $93.70^{2nd}$ & $93.10^{2nd}$ & $\textbf{95.55}^{1st}$ & $93.10^{2nd}$ & $\textbf{94.75}^{1st}$\\
						D & $87.65^{4th}$ & $91.44^{3rd}$ & $90.13^{3rd}$ & $90.44^{3rd}$ & $91.30^{3rd}$ & $\textbf{97.48}^{1st}$  & $95.03^{2nd}$ & $\textbf{96.86}^{1st}$\\
						E & $70.02^{3rd}$ & $71.04^{2nd}$ & $70.04^{2nd}$ & $66.80^{2nd}$ & $70.00^{3rd}$ & $\textbf{77.44}^{1st}$ & $\textbf{76.89}^{1st}$ & $\textbf{77.61}^{1st}$\\
						F & $80.05^{4th}$ & $90.21^{2nd}$ & $90.74^{2nd}$ & $89.42^{2nd}$ & $87.30^{3rd}$ & $90.48^{2nd}$ & $90.16^{2nd}$ & $\textbf{92.59}^{1st}$\\
						G & $96.51^{2nd}$ & $\textbf{96.82}^{1st}$ & $96.31^{2nd}$ & $96.28^{2nd}$ & $96.49^{2nd}$ & $\textbf{97.49}^{1st}$ & $\textbf{97.51}^{1st}$ & $\textbf{97.15}^{1st}$\\
						H & $55.95^{5th}$ & $73.72^{3rd}$ & $59.28^{4th}$ & $59.28^{4th}$ & $\textbf{97.28}^{1st}$ & $75.44^{3rd}$ & $80.39^{2nd}$ & $73.51^{3rd}$\\
						I & $75.57^{3rd}$ & $\textbf{86.43}^{1st}$ & $82.35^{2nd}$ & $82.35^{2nd}$ & $71.95^{3rd}$ & $78.73^{3rd}$ & $\textbf{91.86}^{1st}$ & $\textbf{90.50}^{1st}$\\
						J & $37.14^{4th}$ & $44.29^{3rd}$ & $\textbf{55.71}^{1st}$ & $47.14^{3rd}$ & $40.00^{4th}$ & $50.00^{3rd}$ & $\textbf{54.29}^{1st}$ & $\textbf{58.57}^{1st}$ \\
						K & $85.94^{4th}$ & $89.70^{2nd}$ & $88.13^{3rd}$ & $89.53^{2nd}$ & $87.34^{3rd}$ & $82.81^{5th}$ & $88.91^{2nd}$ & $\textbf{91.56}^{1st}$ \\
						L & $10.67^{4th}$ & $26.40^{2nd}$ & $15.73^{3rd}$ & $15.73^{3rd}$ & $11.80^{4th}$ & $26.97^{2nd}$ & $\textbf{32.58}^{1st}$ & $\textbf{33.34}^{1st}$ \\
						M & $56.84^{5th}$ & $79.47^{2nd}$ & $77.89^{2nd}$ & $74.21^{3rd}$ & $62.11^{4th}$ & $78.95^{2nd}$ & $\textbf{81.05}^{1st}$ & $\textbf{81.05}^{1st}$ \\
						N & $71.19^{2nd}$ & $\textbf{74.10}^{1st}$ & $\textbf{76.08}^{1st}$ & $\textbf{75.78}^{1st}$ & $70.91^{2nd}$ & $70.25^{2nd}$ & $70.66^{2nd}$ & $72.12^{2nd}$ \\
						O & $89.79^{2nd}$ & $89.33^{2nd}$ & $\textbf{93.40}^{1st}$ & $\textbf{93.40}^{1st}$ & $89.61^2nd$ & $88.30^2nd$ & $\textbf{91.52}^{1st}$ & $\textbf{92.16}^{1st}$ \\
						P & $44.53^{4th}$ & $44.53^{4th}$ & $51.36^{2nd}$ & $51.36^{2nd}$ & $39.06^{6th}$ & $42.97^{5th}$ & $\textbf{55.47}^{1st}$ & $48.44^{3rd}$ \\
						Q & $88.09^{3rd}$ & $\textbf{90.79}^{1st}$ & $89.22^{2nd}$ & $89.43^{3rd}$ & $88.45^{3rd}$ & $\textbf{90.95}^{1st}$ & $89.90^{2nd}$ & $\textbf{90.79}^{1st}$ \\
						R & $36.08^{4th}$ & $44.94^{3rd}$ & $39.24^{3rd}$ & $32.91^{4th}$ & $53.17^{2nd}$ & $41.77^{3rd}$ & $\textbf{67.72}^{1st}$ & $55.06^{2nd}$ \\
						\hline
					\end{tabular}
				}
			\end{sc}
		\end{small}
	\end{center}
	\vskip -0.1in
\end{table*}

For pair-wise model comparisons, we employed McNemar's test. Also, since there were $8$ algorithms to be compared, we used Holm's step-down procedure as a multiple hypothesis testing method to control the Family-Wise Error Rate (FWER) \cite{Hochberg1987} of the resulting pair-wise McNemar's tests. The experimental results for a family-wise significance level of $0.05$ are reported in \tref{table2}.

Despite employing a simplistic strategy to infer the weight vector of testing data, \ac{E-R2LML} achieves the best performance for $14$ out of the $18$ datasets and outperforms its transductive version, while the other methods outperform \ac{E-R2LML} on the \textit{Ring Norm}, \textit{Isolet}, \textit{Sonar} and \textit{Wpbc} datasets. GLML's surprisingly good result for the \textit{Ring Norm} dataset is probably because GLML assumes a Gaussian mixture underlying the data generation process and the \textit{Ring Norm} dataset is a $2$-class recognition problem drawn from a mixture of two multivariate normal distributions. \ac{T-R2LML} produced best results for $9$ out of the $18$ datasets. We also notice that \ac{T-R2LML} achieves almost second best results for the remaining datasets except for \textit{Robot}. For the \textit{Ring Norm}, \textit{Sonar} and \textit{Wpbc} datasets, \ac{T-R2LML} even outperforms \ac{E-R2LML}.

Next, PLML exhibits competitive results, more specifically, best in $6$ out of the $18$ cases, but performs poorly on some datasets like \textit{COIL20} and \textit{Sonar}, even worse than \ac{KNN}. 
For \textit{Glass}, \textit{Heart}, \textit{Isolet} and \textit{Optdigits}, PLML's performance is also quite impressive; it is ranked $2^{nd}$ among the other methods. 
Regarding ITML, by using a global metric, it is ranked first for $5$ datasets. Often, ITML ranks at least $2^{nd}$ and seems to be suitable for low-dimensional datasets. 
Finally, GLML rarely performs well; according to \tref{table2}, GLML only achieves $3^{rd}$ or $4^{th}$ ranks for $9$ out of the $18$ datasets. 

Another general observation that can be made is the following: employing metric learning is almost always a good choice, since the classification accuracy of utilizing a Euclidean metric is almost always ranked last among all $8$ methods considered. Interestingly, LMNN-MM, even though being a local metric learning algorithm, does not show any significant performance advantages over LMNN (a global metric method); for some datasets, it even obtained lower classification accuracy than LMNN. It is possible that fixing the number of local metrics to the number of classes present in the dataset curtails LMNN-MM's performance. According to the obtained results, \ac{T-R2LML} and \ac{E-R2LML} yield much better performance for all datasets compared to LMNN-MM. 

\acresetall 

\section{Conclusions}
\label{sec:Conclusions}

In this paper, we proposed a new local metric learning framework, namely \ac{R2LML}. \ac{R2LML} learns $K$ Mahalanobis-based local metrics that are conically combined, so that pairs of similar points are measured as being located close to each other, in contrast to pairs of dissimilar points, for which the opposite is desired. Two variants of the framework were considered: \ac{T-R2LML} employs transductive learning to infer the conic combination of metrics to be used for assessing distances between test and training data, while \ac{E-R2LML} employs a simpler technique to accelerate the learning process. If $T$ is the number of iterations, a local analysis of the block-minimization training procedure of both variants has been shown to be convergent at a rate of $\mathcal{O}(1/\sqrt{T})$, which is typical for sub-gradient methods.


In order to show the merits of \ac{T-R2LML} and \ac{E-R2LML}, we performed a series of experiments involving $18$ benchmark classification problems. First, we studied the effect of regularization in \ac{R2LML} and showed the importance of the nuclear norm-based regularizer in providing low-rank solutions that avoid over-fitting. Second, we varied the number of local metrics $K$ and discussed its influence on classification accuracy. 
We concluded that the obtained optimal $K$ does not necessarily equal the number of classes of the dataset under consideration. Also, our results indicate that larger datasets do not necessarily require employing a large number of local metrics. Finally, in a second set of experiments, we compared \ac{T-R2LML} and \ac{E-R2LML} to several other global or local metric learning algorithms and demonstrated that our proposed framework is highly competitive. 

\section*{Acknowledgments} 

Y. Huang acknowledges partial support from a UCF Graduate College Trustees Doctoral Fellowship and \ac{NSF} grant No. 1200566. C. Li acknowledges partial support from \ac{NSF} grants No. 0806931 and No. 0963146. Furthermore, M. Georgiopoulos acknowledges partial support from \ac{NSF} grants No. 1161228 and No. 0525429, while G. C. Anagnostopoulos acknowledges partial support from \ac{NSF} grant No. 1263011. Note that any opinions, findings, and conclusions or recommendations expressed in this material are those of the authors and do not necessarily reflect the views of the \ac{NSF}. 

\acresetall 

\section*{Appendix A}
\label{app:A}


In order to solve \pref{eq:eqn04} and \pref{eq:R2LML01}, the following \ac{PSD} update scheme is used:

\begin{align}
\label{eq:eqn221}
\boldsymbol{w}_{t + \frac{1}{2}} & = \boldsymbol{w}_t - \eta \boldsymbol{g}^f_t, \\
\label{eq:eqn222}
\boldsymbol{w}_{t + 1} & =  arg \min_{w} \left\{ \frac{1}{2} \left \| \boldsymbol{w} - \boldsymbol{w}_{t + \frac{1}{2}} \right \|^2 + \eta r(\boldsymbol{w}) \right\}.
\end{align}

\noindent 
Above, $\boldsymbol{g}^f_t \in \partial f(\boldsymbol{w}_t)$ and $\eta$ is a fixed step length. \ac{PSD} first computes the unconstrained subgradient with respect to $f$. 

In the second step, we find a new $\boldsymbol{w}_t$ from the intermediate result $\boldsymbol{w}_{t + \frac{1}{2}}$. By the first order optimality condition, with the minimizer $\boldsymbol{w}$, it holds that:

\begin{equation}
\label{eq:eqn223}
\boldsymbol{0} \in \partial \Bigg \{ \frac{1}{2} \left \| \boldsymbol{w} - \boldsymbol{w}_{t + \frac{1}{2}} \right \|^2 + \eta r(\boldsymbol{w}) \Bigg \} \Bigg |_{\boldsymbol{w} = \boldsymbol{w}_{t+1}}. \nonumber
\end{equation}

\noindent 
In light of \eref{eq:eqn221}, the above property amounts to: 

\begin{equation}
\label{eq:eqn23}
\boldsymbol{0} \in \boldsymbol{w}_{t+1} - \boldsymbol{w}_t + \eta \boldsymbol{g}^f_t + \eta \partial r(\boldsymbol{w}_{t+1}).
\end{equation}

\noindent 
Since $\boldsymbol{w}_{t+1}$ is the minimizer of \eref{eq:eqn222}, there is a vector $\boldsymbol{g}^r_{t+1} \in \partial r(\boldsymbol{w}_{t+1})$ such that \eref{eq:eqn23} holds, \ie

\begin{equation}
\label{eq:eqn224}
\boldsymbol{0} = \boldsymbol{w}_{t+1} - \boldsymbol{w}_t + \eta \boldsymbol{g}^f_t + \eta \boldsymbol{g}^r_{t+1}. \nonumber 
\end{equation}

\noindent 
Finally, we have the following \ac{PSD} update rule:

\begin{equation}
\label{eq:eqn241}
\boldsymbol{w}_{t+1} = \boldsymbol{w}_t - \eta \boldsymbol{g}^f_t - \eta \boldsymbol{g}^r_{t+1}.
\end{equation}

\noindent
With the definitions of $\left \| \partial f(\boldsymbol{w}) \right \|$ and $\left \| \partial r(\boldsymbol{w}) \right \|$ in \sref{sec:Analysis}, we provide \lemmaref{lem4} as follows. Note that, unless specified otherwise, $\left \|  \cdot \right \|$ will stand for the $L_2$ norm. 


\begin{lem}
	\label{lem4}
	Assume that the subgradients of $f$ and $r$ are bounded as in \eref{eq:eqn11} for some positive scalars $A$ and $G$. Let $\eta \geq 0$ be a fixed step length and $\boldsymbol{w}^*$ be the minimizer of $f(\boldsymbol{w}) + r(\boldsymbol{w})$. Then, for a constant $c \leq 4$ we have:
	
	\begin{align}
	\label{eq:eqn25}
	2 \eta (1 - cA\eta)f(\boldsymbol{w}_t) & + 2\eta(1-cA\eta)r(\boldsymbol{w}_{t+1}) \nonumber \\
	& \leq 2\eta f(\boldsymbol{w}^*) + 2\eta r(\boldsymbol{w}^*) + \left \| \boldsymbol{w}_t - \boldsymbol{w}^* \right \|^2 \nonumber \\
	& - \left \| \boldsymbol{w}_{t+1} - \boldsymbol{w}^* \right \|^2 + 8\eta^2G^2.
	\end{align}
	
\end{lem}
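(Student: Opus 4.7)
The plan is to carry out a standard one-step Lyapunov analysis with $\|\boldsymbol{w}_t - \boldsymbol{w}^*\|^2$ as the potential function, exploiting the explicit form of the PSD update derived in equation (24), namely $\boldsymbol{w}_{t+1} = \boldsymbol{w}_t - \eta\boldsymbol{g}^f_t - \eta\boldsymbol{g}^r_{t+1}$ with $\boldsymbol{g}^f_t \in \partial f(\boldsymbol{w}_t)$ and $\boldsymbol{g}^r_{t+1} \in \partial r(\boldsymbol{w}_{t+1})$. I would begin by squaring both sides of this update after subtracting $\boldsymbol{w}^*$, giving
\begin{equation*}
\|\boldsymbol{w}_{t+1}-\boldsymbol{w}^*\|^2 = \|\boldsymbol{w}_t-\boldsymbol{w}^*\|^2 - 2\eta\langle \boldsymbol{g}^f_t + \boldsymbol{g}^r_{t+1},\boldsymbol{w}_t - \boldsymbol{w}^*\rangle + \eta^2\|\boldsymbol{g}^f_t + \boldsymbol{g}^r_{t+1}\|^2.
\end{equation*}
The whole proof is about bounding the two right-hand terms from above in a way that produces $f(\boldsymbol{w}_t)$ and $r(\boldsymbol{w}_{t+1})$ on the appropriate side, plus an $O(\eta^2 G^2)$ remainder.

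For the linear cross term I would invoke convexity of $f$ to get $\langle \boldsymbol{g}^f_t,\boldsymbol{w}_t-\boldsymbol{w}^*\rangle \geq f(\boldsymbol{w}_t)-f(\boldsymbol{w}^*)$. The subtle piece is that $\boldsymbol{g}^r_{t+1}$ is a subgradient of $r$ at $\boldsymbol{w}_{t+1}$, not at $\boldsymbol{w}_t$; I would handle this by splitting $\boldsymbol{w}_t-\boldsymbol{w}^* = (\boldsymbol{w}_t-\boldsymbol{w}_{t+1}) + (\boldsymbol{w}_{t+1}-\boldsymbol{w}^*)$, bounding the second piece via convexity of $r$ to obtain $r(\boldsymbol{w}_{t+1})-r(\boldsymbol{w}^*)$, and rewriting the first piece using $\boldsymbol{w}_t - \boldsymbol{w}_{t+1} = \eta(\boldsymbol{g}^f_t + \boldsymbol{g}^r_{t+1})$. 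This produces the exact inner product $\eta\|\boldsymbol{g}^r_{t+1}\|^2 + \eta\langle \boldsymbol{g}^r_{t+1},\boldsymbol{g}^f_t\rangle$, whose cross contribution I would control via Young's inequality $|\langle a,b\rangle|\leq \tfrac{1}{2}(\|a\|^2+\|b\|^2)$.

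For the quadratic term $\eta^2\|\boldsymbol{g}^f_t + \boldsymbol{g}^r_{t+1}\|^2$ I would use the coarse bound $\|a+b\|^2\leq 2\|a\|^2 + 2\|b\|^2$, so the whole right-hand side is reduced to a linear combination of $\|\boldsymbol{g}^f_t\|^2$ and $\|\boldsymbol{g}^r_{t+1}\|^2$ (with constants like $3\eta^2$ and $\eta^2$, the precise values being of secondary importance as long as they are absorbed by $c\leq 4$ and the $8\eta^2 G^2$ slack). I would then invoke the bounded-subgradient assumption (11) to replace $\|\boldsymbol{g}^f_t\|^2\leq Af(\boldsymbol{w}_t)+G^2$ and $\|\boldsymbol{g}^r_{t+1}\|^2\leq Ar(\boldsymbol{w}_{t+1})+G^2$, move the $Af(\boldsymbol{w}_t)$ and $Ar(\boldsymbol{w}_{t+1})$ contributions to the left-hand side to form the factors $(1-cA\eta)$ on $f$ and $r$, and bound the leftover $G^2$ constants by $8\eta^2G^2$. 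Throughout, the nonnegativity of $f$ and $r$ (which follows from them being sums of squared norms plus hinges, and a nuclear norm, so that the lower bound can be taken to be $0$ without loss of generality) is needed to justify the monotone weakening of the prefactor to a common $(1-cA\eta)$.

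The main obstacle I anticipate is precisely the $\boldsymbol{w}_t$-vs-$\boldsymbol{w}_{t+1}$ mismatch in the $r$ inner product: a naive application of convexity either does not couple cleanly with the update rule or produces an uncontrollable cross term $\langle \boldsymbol{g}^f_t,\boldsymbol{g}^r_{t+1}\rangle$. The resolution via the telescoping split combined with Young's inequality is what forces the slightly loose constant $c\leq 4$ and the factor $8$ rather than a tighter constant in $8\eta^2 G^2$; book-keeping of these constants is the only real bit of care required, everything else being routine convexity and algebra.
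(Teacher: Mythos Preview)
Your proposal is correct and follows essentially the same route as the paper: expand $\|\boldsymbol{w}_{t+1}-\boldsymbol{w}^*\|^2$ via the update rule \eref{eq:eqn241}, use convexity of $f$ at $\boldsymbol{w}_t$, handle the mismatch in the $r$ term by the telescoping split $\boldsymbol{w}_t-\boldsymbol{w}^*=(\boldsymbol{w}_t-\boldsymbol{w}_{t+1})+(\boldsymbol{w}_{t+1}-\boldsymbol{w}^*)$, and then invoke the subgradient bound \eref{eq:eqn11}. The only tactical difference is that the paper controls the cross products $\langle \boldsymbol{g}^f_t,\boldsymbol{g}^r_{t+1}\rangle$ via Cauchy--Schwarz followed by $\|\boldsymbol{g}^f_t\|\,\|\boldsymbol{g}^r_{t+1}\|\le A\max\{f(\boldsymbol{w}_t),r(\boldsymbol{w}_{t+1})\}+G^2$ and then $\max\{f,r\}\le f+r$, whereas you use Young's inequality and the cruder bound $\|a+b\|^2\le 2\|a\|^2+2\|b\|^2$; your bookkeeping actually yields the slightly sharper intermediate constants $\tfrac{3}{2}$ and $\tfrac{1}{2}$ (and $4\eta^2G^2$) before weakening to $c\le 4$ and $8\eta^2G^2$, but the argument is structurally identical.
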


\begin{proof}
	By the definition of the subgradient, $\boldsymbol{g}^r_{t+1} \in \partial r(\boldsymbol{w+1})$ and $r$'s convexity:
	
	\begin{align}
	\label{eq:eqn26}
	& \ r(\boldsymbol{w}^*) \geq r(\boldsymbol{w}_{t+1})+\left \langle \boldsymbol{g}^r_{t+1}, \boldsymbol{w}^* - \boldsymbol{w}_{t+1} \right \rangle \nonumber \\
	\Rightarrow & \ - \left \langle \boldsymbol{g}^r_{t+1}, \boldsymbol{w}_{t+1} - \boldsymbol{w}^* \right \rangle \leq r(\boldsymbol{w}^*) - r(\boldsymbol{w}_{t+1}),
	\end{align}
	
	\noindent
	Additionally, the following relations hold:
	
	\begin{align}
	\label{eq:eqn27}
	\left \langle \boldsymbol{g}^r_{t+1}, \boldsymbol{w}_{t+1} - \boldsymbol{w}_t \right \rangle & = \left \langle \boldsymbol{g}^r_{t+1}, -\eta \boldsymbol{g}^f_{t} - \eta \boldsymbol{g}^r_{t+1} \right \rangle \nonumber \\
	& \overset{\eref{eq:eqn241}}{\leq} \left \| \boldsymbol{g}^r_{t+1} \right \| \left \| \eta \boldsymbol{g}^f_{t} + \eta \boldsymbol{g}^r_{t+1} \right \| \nonumber \\
	& \leq \eta \left \| \boldsymbol{g}^r_{t+1} \right \|^2 + \eta \left \| \boldsymbol{g}^f_t \right \| \left \| \boldsymbol{g}^r_{t+1} \right \| \nonumber \\
	& \overset{\eref{eq:eqn11}}{\leq}  \eta (Ar(\boldsymbol{w}_{t+1}) + G^2) \nonumber \\
	& + \eta (A \ max \{ f(\boldsymbol{w}_t), r(\boldsymbol{w}_{t+1})\} + G^2),
	\end{align}
	
	\noindent
	where the second step is due to Cauchy-Schwarz inequality.
	
	
	\noindent
	Now we relate $\left \| \boldsymbol{w}_{t+1} - \boldsymbol{w}^* \right \|$ to $\left \| \boldsymbol{w}_t - \boldsymbol{w}^*\right \|$ as follows:
	
	\begin{align}
	& \left \| \boldsymbol{w}_{t+1} - \boldsymbol{w}^* \right \|^2 = \left \| \boldsymbol{w}_t - \eta \boldsymbol{g}^f_t - \eta \boldsymbol{g}^r_{t+1} - \boldsymbol{w}^* \right \|^2\nonumber \\
	= & \left \| \boldsymbol{w}_t - \boldsymbol{w}^* \right \|^2 - 2(\eta \left \langle \boldsymbol{g}^f_t, \boldsymbol{w}_t - \boldsymbol{w}^* \right \rangle + \eta \left \langle \boldsymbol{g}^r_{t+1}, \boldsymbol{w}_t - \boldsymbol{w}^* \right \rangle) \nonumber \\
	& + \left \| \eta \boldsymbol{g}^f_t + \eta \boldsymbol{g}^r_{t+1}\right \|^2 \nonumber \\
	= & \left \| \boldsymbol{w}_t - \boldsymbol{w}^* \right \|^2-2 \eta \left \langle \boldsymbol{g}^f_t, \boldsymbol{w}_t - \boldsymbol{w}^* \right \rangle + \eta^2 \left \| \boldsymbol{g}^f_t + \boldsymbol{g}^r_{t+1}\right \|^2 \nonumber \\
	& - 2\eta(\left \langle \boldsymbol{g}^r_{t+1}, \boldsymbol{w}_{t+1} - \boldsymbol{w}^* \right \rangle - \left \langle \boldsymbol{g}^r_{t+1}, \boldsymbol{w}_{t+1} - \boldsymbol{w}_t \right \rangle).
	\label{eq:eqn28}
	\end{align}
	
	\noindent
	In \eref{eq:eqn28}, $\eta^2 \left \| \boldsymbol{g}^f_t + \boldsymbol{g}^r_{t+1}\right \|^2$ can be bounded as follows: 
	
	\begin{align}
	\label{eq:eqn29}
	& \eta^2 \left \| \boldsymbol{g}^f_t + \boldsymbol{g}^r_{t+1}\right \|^2 \nonumber \\
	= & \ \eta^2 \left \| \boldsymbol{g}^f_t \right \|^2 + 2\eta^2 \left \langle \boldsymbol{g}^f_t, \boldsymbol{g}^r_{t+1} \right \rangle + \eta^2 \left \| \boldsymbol{g}^r_{t+1} \right \|^2 \nonumber \\
	\leq & \ \eta^2(Af(\boldsymbol{w}_t) + G^2) + 2\eta^2 A \ max\{ f(\boldsymbol{w}_t), r(\boldsymbol{w}_{t+1})\} \nonumber \\ 
	& + \eta^2(Ar(\boldsymbol{w}_{t+1}) + G^2) \nonumber \\
	= & \ \eta^2 A f(\boldsymbol{w}_t) + 2\eta^2 A \ max\{ f(\boldsymbol{w}_t), r(\boldsymbol{w}_{t+1})\} \nonumber \\
	& + \eta^2 A r(\boldsymbol{w}_{t+1}) + 4 \eta^2 G^2.
	\end{align}
	
	\noindent
	When  \eref{eq:eqn27} and \eref{eq:eqn29} are substituted into \eref{eq:eqn28}, which obtain:
	
	\begin{align}
	\label{eq:eqn30}
	\left \| \boldsymbol{w}_{t+1} - \boldsymbol{w}^* \right \|^2 & \leq  \ \left \| \boldsymbol{w}_t - \boldsymbol{w}^* \right \|^2 - 2\eta \left \langle \boldsymbol{g}^f_t , \boldsymbol{w}_t-\boldsymbol{w}^* \right \rangle \nonumber \\
	& - 2\eta \left \langle \boldsymbol{g}^r_{t+1}, \boldsymbol{w}_{t+1}-\boldsymbol{w}^* \right \rangle +   \nonumber \\
	& + \eta^2 A f(\boldsymbol{w}_t) + 3\eta^2Ar(\boldsymbol{w}_{t+1}) + \nonumber \\
	& + 4\eta^2 A \ \max \{f(\boldsymbol{w}_t), r(\boldsymbol{w}_{t+1})\} + 8 \eta^2 G^2.
	\end{align}
	

	\noindent
	The convexities of both of $f(\boldsymbol{w})$ and $r(\boldsymbol{w})$ imply that: 
	
	\begin{align}
	\label{eq:eqn31}
	- \left \langle \boldsymbol{g}^f_t, \boldsymbol{w}_t-\boldsymbol{w}^* \right \rangle & \leq f(\boldsymbol{w}^*) - f(\boldsymbol{w}_t), \\
	\label{eq:eqn32}
	- \left \langle \boldsymbol{g}^r_{t+1}, \boldsymbol{w}_{t+1}-\boldsymbol{w}^* \right \rangle & \leq r(\boldsymbol{w}^*) - r(\boldsymbol{w}_{t+1}).
	\end{align}
	
	\noindent
	The following also holds:
	
	\begin{equation}
	\label{eq:eqn33}
	\max \{ f(\boldsymbol{w}_t), r(\boldsymbol{w}_{t+1})\} \leq f(\boldsymbol{w}_t) + r(\boldsymbol{w}_{t+1}).
	\end{equation}
	
	\noindent
	By substituting \eref{eq:eqn31}, \eref{eq:eqn32} and \eref{eq:eqn33} into \eref{eq:eqn30}, we obtain
	
	\begin{align}
	\label{eq:eqn34}
	\left \| \boldsymbol{w}_{t+1} - \boldsymbol{w}^* \right \|^2 \leq & \left \| \boldsymbol{w}_t-\boldsymbol{w}^* \right \| + 8\eta^2G^2 \nonumber \\
	& + 2\eta[f(\boldsymbol{w}^*) - (1-\frac{5}{2}\eta A)f(\boldsymbol{w}_t)] \nonumber \\
	& + 2\eta [r(\boldsymbol{w}^*) - (1 - \frac{7}{2}\eta A)r(\boldsymbol{w}_{t+1})] \nonumber \\
	\leq & \left \| \boldsymbol{w}_t-\boldsymbol{w}^* \right \| + 8\eta^2G^2 \nonumber \\
	& + 2\eta[f(\boldsymbol{w}^*) - (1-c\eta A)f(\boldsymbol{w}_t)] \nonumber \\
	& + 2\eta [r(\boldsymbol{w}^*) - (1 - c\eta A)r(\boldsymbol{w}_{t+1})].
	\end{align}
	
	\noindent
	By choosing $c \leq 4$, the second inequality holds. After some algebra, one can derive \eref{eq:eqn25} from \eref{eq:eqn34}. 
	
\end{proof}

\noindent The following is the detailed proof of \thmref{lem2}:

\begin{proof}
	By \lemmaref{lem4}, we have:
	
	\begin{align}
	\label{eq:eqn35}
	2 \eta [(1 - cA \eta ) f(\boldsymbol{w}_T) - f(\boldsymbol{w}^*) ] + & 2\eta[(1-cA\eta)r(\boldsymbol{w}_{t+1}) - r(\boldsymbol{w}^*)] \nonumber \\
	\leq & \left \| \boldsymbol{w}_t - \boldsymbol{w}^* \right \|^2 - \left \| \boldsymbol{w}_{t+1} - \boldsymbol{w}^* \right \|^2 \nonumber \\
	& + 8 \eta^2 G^2.
	\end{align}
	
	\noindent
	Summing \eref{eq:eqn35} over $t = 1, \ldots, T$ we get
	
	\begin{align}
	\label{eq:eqn36}
	& \sum^T_{t = 1} 2 \eta [(1 - cA \eta ) f(\boldsymbol{w}_T) - f(\boldsymbol{w}^*) ] \nonumber \\
	& + 2\eta[(1-cA\eta)r(\boldsymbol{w}_{t+1}) - r(\boldsymbol{w}^*)] \nonumber \\
	\leq & \left \| \boldsymbol{w}_1 - \boldsymbol{w}^* \right \|^2 - \left \| \boldsymbol{w}_{T+1} - \boldsymbol{w}^* \right \|^2 + 8 T \eta^2 G^2 \nonumber \\
	\leq & \left \| \boldsymbol{w}_1 - \boldsymbol{w}^* \right \|^2 + 8 T \eta^2 G^2 \nonumber \\
	\leq & D^2 + 8 T \eta^2 G^2.
	\end{align}
	
	\noindent
	The last inequality holds because $\left \| \boldsymbol{w}^* \right \|^2 \leq D$ and $\boldsymbol{w}_1 = 0$ as described in \thmref{lem2}. For part of \eref{eq:eqn36}, it holds:
	
	
	\begin{align}
	\label{eq:eqn37}
	& \sum_{t = 1}^{T} \eta [(1 - cA\eta)r(\boldsymbol{w}_{t+1}) - r(\boldsymbol{w}^*)] \nonumber \\
	= & \sum^{T}_{t = 1} \eta[(1 - cA\eta)r(\boldsymbol{w}_t) - \boldsymbol{w}^*] - \eta [(1 - cA\eta)r(\boldsymbol{w}_1) - r(\boldsymbol{w}^*)] \nonumber \\
	& + \eta [(1 - cA\eta) r(\boldsymbol{w}_{T+1}) - r(\boldsymbol{w}^*)] \nonumber \\
	= & \sum^{T}_{t = 1} \eta[(1 - cA\eta)r(\boldsymbol{w}_t) - \boldsymbol{w}^*] + \eta (1 - cA\eta) r(\boldsymbol{w}_{T+1}) \nonumber \\
	\geq & \sum^{T}_{t = 1} \eta[(1 - cA\eta)r(\boldsymbol{w}_t) - \boldsymbol{w}^*].
	\end{align}
	
	\noindent
	The second equality holds due to the assumptions that $\boldsymbol{w}_1 = \boldsymbol{0}$ and $r(\boldsymbol{0}) = 0$. Besides, given the step length $\eta$, this term $\eta (1 - cA\eta) r(\boldsymbol{w}_{T+1})$ is larger than $0$, which establishes the last inequality. Now, when substituting \eref{eq:eqn37} back into \eref{eq:eqn36}, we get
	
	\begin{align}
	\label{eq:eqn38}
	& \sum^T_{t=1} 2\eta[(1-cA\eta)r(\boldsymbol{w}_t)-\boldsymbol{w}^*] \nonumber \\
	& + 2\eta[(1-cA\eta)r(\boldsymbol{w}_t) - r(\boldsymbol{w}^*)] \leq D^2 + 8 T \eta^2 G^2 \nonumber \\
	\Rightarrow & \ 2\eta(1-cA\eta) \sum^T_{t = 1} [f(\boldsymbol{w}_t) + r(\boldsymbol{w}_t)] \nonumber \\
	& - 2\eta T (f(\boldsymbol{w}^*) + r(\boldsymbol{w}^*)) \leq D^2 + 8 T \eta^2 G^2 \nonumber \\
	\Rightarrow & \sum^T_{t = 1} f(\boldsymbol{w}_t) + r(\boldsymbol{w}_t) \leq \frac{8 T \eta^2 G^2}{2\eta(1-cA\eta)} + \frac{T (f(\boldsymbol{w}^*) + r(\boldsymbol{w}^*))}{1-cA\eta}.
	\end{align}
	
	\noindent
	Additionally, the following holds:
	
	\begin{align}
	\label{eq:eqn39}
	\underset{t \in \{ 1 ... T \}}{min} f(\boldsymbol{w}_t) + r(\boldsymbol{w}_t) \leq \frac{1}{T} \sum^T_{t = 1} f(\boldsymbol{w}_t) + r(\boldsymbol{w}_t).
	\end{align}
	
	\noindent
	Based on \eref{eq:eqn38}, \eref{eq:eqn39} and choosing $\eta = \frac{D}{\sqrt{8T}G}$, we obtain the main result shown of \thmref{lem2}.
	
\end{proof}

\section*{Appendix B}
\label{app:B}

In this section, we provide the detailed proof of \thmref{thm3} in \sref{sec:Analysis}.

\begin{proof}
	We first prove that each of the two or three block minimizations in our algorithms decrease the objective function value under consideration. This is true for the first block minimization, according to \thmref{lem2}. For the second block, since a \ac{MM} algorithm is used, we have the following relationships:
	
	\begin{equation}
	\label{eq:eqn13}
	q(\boldsymbol{g}^*)=q(\boldsymbol{g}^* | \boldsymbol{g}^*) \leq q(\boldsymbol{g}^* | \boldsymbol{g}') \leq q(\boldsymbol{g}' | \boldsymbol{g}') = q(\boldsymbol{g}').
	\end{equation}
	
	\noindent
	This implies that the second block minimization does not increase the objective function value. The optimal algorithm for the third block also guarantees the non-increasing nature of the cost function. Since the objective function is lower-bounded, \aref{alg:alg2} converges.
	
	Next, we prove that the set of fixed points of the proposed \aref{alg:alg2} includes the \ac{KKT} points of \pref{eq:R2LML01}. Towards this purpose, suppose the algorithm has converged to a \ac{KKT} point $\left\{ \boldsymbol{L}^{k*}, \boldsymbol{g}^{k*} \right\}_{k \in \mathbb{N}_K}$; then, it suffices to show that this point is also a fixed point of the algorithm's iterative map. For notational brevity, 
	let $f_{0}(\boldsymbol{L}^k,\boldsymbol{g}^k)$, $f_{1}(\boldsymbol{g}^k)$ and $h_{1}(\boldsymbol{g}^k)$ be the cost function, inequality constraint and equality constraint of \pref{eq:R2LML01} respectively. By definition, a \ac{KKT} point will satisfy
	
	\begin{align}
	\label{eq:eqn14}
	& \boldsymbol{0} \in \partial_{\boldsymbol{L}^k} f_{0}(\boldsymbol{L}^{k*},\boldsymbol{g}^{k*}) + \bigtriangledown_{\boldsymbol{g}^k} f_{0}(\boldsymbol{L}^{k*},\boldsymbol{g}^{k*})\\
	& -(\boldsymbol{\beta}^k)^T \bigtriangledown_{\boldsymbol{g}^k} f_{1}(\boldsymbol{g}^{k*}) + \boldsymbol{\alpha}^T \bigtriangledown_{\boldsymbol{g}^k} h_{1}(\boldsymbol{g}^{k*}),  \ \ \ k \in \mathbb{N}_K.  \nonumber
	\end{align}

	\noindent
	In relation to \pref{eq:eqn08}, which the second block tries to solve, by setting the gradient of the problem's Lagrangian to $\boldsymbol{0}$, the \ac{KKT} point will satisfy the following equality:

	\begin{align}
	\label{eq:eqn15}
	2\boldsymbol{\tilde{S}}\boldsymbol{g}^*-\boldsymbol{\beta}-\boldsymbol{B}^T\boldsymbol{\alpha} = \boldsymbol{0}.
	\end{align}
	
	\noindent
	\pref{eq:eqn09} can be solved based on \eref{eq:A3} of \thmref{thm1}; in specific, we obtain that 
	
	\begin{equation}
	\label{eq:extra}
	\boldsymbol{g} = - \frac{1}{2\boldsymbol{\mu}}(\boldsymbol{B}^T\boldsymbol{\alpha}+\boldsymbol{\beta}-2\boldsymbol{H}\boldsymbol{g}^*).
	\end{equation}
	
	\noindent
	Substituting \eref{eq:eqn15} and $\boldsymbol{H}=\boldsymbol{\tilde{S}}+\mu \boldsymbol{I}$ into \eref{eq:extra}, one immediately obtains that
	
	\begin{align}
	\label{eq:eqn16}
	\boldsymbol{g} & = - \frac{1}{2\boldsymbol{\mu}}(\boldsymbol{B}^T\boldsymbol{\alpha}+\boldsymbol{\beta}-2\boldsymbol{H}\boldsymbol{g}^*) \nonumber \\
	& = - \frac{1}{2\boldsymbol{\mu}}(2\boldsymbol{\tilde{S}}\boldsymbol{g}^*-2\boldsymbol{\tilde{S}}\boldsymbol{g}^*-2\boldsymbol{\mu}\boldsymbol{g}^*) = \boldsymbol{g}^*.
	\end{align} 
	
	\noindent
	In other words, step $2$ of \aref{alg:alg2} will not update the solution. Now, if we substitute \eref{eq:eqn15} back into \eref{eq:eqn14}, we obtain $\boldsymbol{0} \in \partial_{\boldsymbol{L}^k}f_{0}(\boldsymbol{L}^{k*},\boldsymbol{g}^{k*})$ for all $k$, which is the optimality condition for the subgradient method; the PSD step (the first block minimization of \aref{alg:alg2}) will also not update the solution. Thus, a \ac{KKT} point of \pref{eq:R2LML01} is a fixed point of our algorithm. 
	
	Finally, we prove that the set of fixed points of the proposed \aref{alg:alg2} includes the \ac{KKT} points of \pref{eq:eqn04}. We assume the algorithm has converged to a \ac{KKT} point $\left\{ \boldsymbol{L}^{k*}, \boldsymbol{g}^{k*} \right\}_{k \in \mathbb{N}_K}$ and $S^*$ is the true similarity matrix. Similar to the previous proof, we start from the second block. Following the same procedure, we find the second block will not update the solution of vector $\boldsymbol{g}^*$. Now, during the third block minimization, the $\psi_{mn}$ quantities remain unchanged, since $\boldsymbol{g}^*$ does not change. The minimization procedure we proposed for the third block will leave the similarity matrix unchanged, since the coefficient matrix with elements $\psi_{mn}$ is fixed. Now, if \eref{eq:eqn15} is substituted back into \eref{eq:eqn14}, we obtain the optimality condition for the first block minimization. Thus, the first block will also not update the solution. Therefore, a \ac{KKT} point of \pref{eq:eqn04} is a fixed point of \aref{alg:alg2}.
	
\end{proof}

\bibliographystyle{spbasic}
\bibliography{TNNLS2018paper}

\begin{thebibliography}{35}
\providecommand{\natexlab}[1]{#1}
\providecommand{\url}[1]{{#1}}
\providecommand{\urlprefix}{URL }
\expandafter\ifx\csname urlstyle\endcsname\relax
  \providecommand{\doi}[1]{DOI~\discretionary{}{}{}#1}\else
  \providecommand{\doi}{DOI~\discretionary{}{}{}\begingroup
  \urlstyle{rm}\Url}\fi
\providecommand{\eprint}[2][]{\url{#2}}

\bibitem[{Balaji and Bapat(2007)}]{Balaji2007}
Balaji R, Bapat R (2007) On euclidean distance matrices. Linear Algebra and its
  Applications 424(1):108 -- 117

\bibitem[{Bennett(1999)}]{Bennett1999}
Bennett KP (1999) Advances in kernel methods. MIT Press, Cambridge, MA, USA

\bibitem[{Bilenko et~al(2004)Bilenko, Basu, and Mooney.}]{Bilenko2004}
Bilenko M, Basu S, Mooney RJ (2004) Integrating constraints and metric learning
  in semi-spervised clustering. In: Proceedings of the International Conference
  on Machine Learning (ICML), ACM, pp 81--88

\bibitem[{Cand{\`e}s and Recht(2008)}]{Cand`es2008}
Cand{\`e}s EJ, Recht B (2008) Exact matrix completion via convex optimization.
  CoRR abs/0805.4471

\bibitem[{Cand{\`e}s and Tao(2009)}]{Cand`es2009}
Cand{\`e}s EJ, Tao T (2009) The power of convex relaxation: near-optimal matrix
  completion. CoRR abs/0903.1476

\bibitem[{Chen et~al(2009)Chen, Pan, Kwok, and Carbonell}]{Chen2009}
Chen X, Pan W, Kwok JT, Carbonell JG (2009) Accelerated gradient method for
  multi-task sparse learning problem. In: Proceedings of the International
  Conference on Data Mining (ICDM), IEEE Computer Society, pp 746--751

\bibitem[{Chen et~al(2002)Chen, Wang, and Dong}]{Chen2002}
Chen Y, Wang G, Dong S (2002) Learning with progressive transductive support
  vector machine. In: Proceedings of the International Conference on Data
  Mining (ICDM), IEEE Computer Society, pp 67--74

\bibitem[{Chopra et~al(2005)Chopra, Hadsell, and Lecun}]{Chopra2005}
Chopra S, Hadsell R, Lecun Y (2005) Learning a similarity metric
  discriminatively, with application to face verification. In: Proceedings of
  the Computer Vision and Pattern Recognition (CVPR), IEEE Press, pp 539--546

\bibitem[{Davis et~al(2007)Davis, Kulis, Jain, Sra, and Dhillon}]{Davis2007}
Davis JV, Kulis B, Jain P, Sra S, Dhillon IS (2007) Information-theoretic
  metric learning. In: Proceedings of the International Conference on Machine
  Learning (ICML), ACM, pp 209--216

\bibitem[{Duchi and Singer(2009)}]{Duchi2009}
Duchi J, Singer Y (2009) Efficient online and batch learning using forward
  backward splitting. Journal of Machine Learning Research 10:2899--2934

\bibitem[{Gammerman et~al(2013)Gammerman, Vovk, and Vapnik}]{Gammerman2013}
Gammerman A, Vovk V, Vapnik V (2013) Learning by transduction. CoRR
  abs/1301.7375

\bibitem[{Goldberger et~al(2004)Goldberger, Roweis, Hinton, and
  Salakhutdinov}]{Goldberger2004}
Goldberger J, Roweis S, Hinton G, Salakhutdinov R (2004) Neighbourhood
  components analysis. In: Proceedings of the Neural Information Processing
  Systems Foundation (NIPS), MIT Press, pp 513--520

\bibitem[{Hastie and Tibshirani(1996)}]{Hastie1996}
Hastie T, Tibshirani R (1996) Discriminant adaptive nearest neighbor
  classification. IEEE Transactions on Pattern Analysis and Machine
  Intelligence 18(6):607--616

\bibitem[{Hochberg and Tamhane(1987)}]{Hochberg1987}
Hochberg Y, Tamhane AC (1987) Multiple comparison procedures. John Wiley \&
  Sons, Inc., New York, NY, USA

\bibitem[{Huang et~al(2013)Huang, Li, Georgiopoulos, and
  Anagnostopoulos}]{Huang2013}
Huang Y, Li C, Georgiopoulos M, Anagnostopoulos GC (2013) Reduced-rank local
  distance metric learning. In: Proceedings of the European Conference on
  Machine Learning and Principles and Practice of Knowledge Discovery in
  Databases (ECML-PKDD)

\bibitem[{Hunter and Lange(2004)}]{Hunter2004}
Hunter DR, Lange K (2004) A tutorial on mm algorithms. The American
  Statistician 58(1)

\bibitem[{Joachims(1999)}]{Joachims1999}
Joachims T (1999) Transductive inference for text classification using support
  vector machines. In: Proceedings of the International Conference on Machine
  Learning (ICML), ACM, pp 200--209

\bibitem[{Joachims(2003)}]{Joachims2003}
Joachims T (2003) Transductive learning via spectral graph partitioning. In:
  Proceedings of the International Conference on Machine Learning (ICML), ACM

\bibitem[{Kukar et~al(2002)Kukar, Kononenko, and Si-Ljubljana}]{Kukar2002}
Kukar M, Kononenko I, Si-Ljubljana (2002) Reliable classifications with machine
  learning. In: Proceedings of the European Conference on Machine Learning
  (ECML), Springer, pp 219--231

\bibitem[{Langford et~al(2009)Langford, Li, and Zhang}]{Langford2009}
Langford J, Li L, Zhang T (2009) Sparse online learning via truncated gradient.
  Journal of Machine Learning Research 10:777--801

\bibitem[{Liu and Chang(2009)}]{Liu2009}
Liu W, Chang SF (2009) Robust multi-class transductive learning with graphs.
  In: Proceedings of the Computer Vision and Pattern Recognition (CVPR), IEEE
  Press, pp 381--388

\bibitem[{Noh et~al(2010)Noh, Zhang, and Lee}]{Noh2010}
Noh YK, Zhang BT, Lee DD (2010) Generative local metric learning for nearest
  neighbor classification. In: Proceedings of the Neural Information Processing
  Systems Foundation (NIPS), MIT Press

\bibitem[{Rakotomamonjy et~al(2011)Rakotomamonjy, Flamary, Gasso, and
  Canu}]{Rakotomamonjy2011}
Rakotomamonjy A, Flamary R, Gasso G, Canu S (2011) lp-lq penalty for sparse
  linear and sparse multiple kernel multi-task learning. IEEE Transactions on
  Neural Networks 22:1307--1320

\bibitem[{Sefer and Kingsford(2011)}]{Sefer2011}
Sefer E, Kingsford C (2011) Metric labeling and semi-metric embedding for
  protein annotation prediction. In: Research in Computational Molecular
  Biology, Springer, pp 392--407

\bibitem[{Shalev-Shwartz and Tewari(2011)}]{ShalevShwartz2011}
Shalev-Shwartz S, Tewari A (2011) Stochastic methods for l1-regularized loss
  minimization. Journal of Machine Learning Research 12:1865--1892

\bibitem[{Shalev-Shwartz et~al(2004)Shalev-Shwartz, Singer, and
  Ng}]{ShalevShwartz2004}
Shalev-Shwartz S, Singer Y, Ng AY (2004) Online and batch learning of
  pseudo-metrics. In: Proceedings of the International Conference on Machine
  Learning (ICML), ACM

\bibitem[{Talukdar and Crammer(2009)}]{Talukdar2009}
Talukdar PP, Crammer K (2009) New regularized algorithms for transductive
  learning. In: Proceedings of the European Conference on Machine Learning and
  Principles and Practice of Knowledge Discovery in Databases (ECML-PKDD),
  Springer, pp 442--457

\bibitem[{Vapnik(1998)}]{Vapnik1998}
Vapnik VN (1998) Statistical learning theory, 1st edn. Wiley

\bibitem[{Wang et~al(2012)Wang, Kalousis, and Woznica}]{Wang2012}
Wang J, Kalousis A, Woznica A (2012) Parametric local metric learning for
  nearest neighbor classification. In: Proceedings of the Neural Information
  Processing Systems Foundation (NIPS), MIT Press, pp 1610--1618

\bibitem[{Weinberger and Saul(2008)}]{Weinberger2008}
Weinberger K, Saul L (2008) Fast solvers and efficient implementations for
  distance metric learning. In: Proceedings of the International Conference on
  Machine Learning (ICML), ACM, pp 1160--1167

\bibitem[{Weinberger et~al(2006)Weinberger, Blitzer, and Saul}]{Weinberger2006}
Weinberger KQ, Blitzer J, Saul LK (2006) Distance metric learning for large
  margin nearest neighbor classification. In: Proceedings of the Neural
  Information Processing Systems Foundation (NIPS), MIT Press

\bibitem[{Xing et~al(2002)Xing, Ng, Jordan, and Russell}]{Xing2002}
Xing EP, Ng AY, Jordan MI, Russell S (2002) Distance metric learning with
  application to clustering with side-information. In: Proceedings of the
  Neural Information Processing Systems Foundation (NIPS), MIT Press, pp
  505--512

\bibitem[{Yang et~al(2006)Yang, Jin, Sukthankar, and Liu}]{Yang2006}
Yang L, Jin R, Sukthankar R, Liu Y (2006) An efficient algorithm for local
  distance metric learning. In: Proceedings of AAAI Conference on Artificial
  Intelligence (AAAI), AAAI Press

\bibitem[{Zhou and Burges(2007)}]{Zhou2007}
Zhou D, Burges CJC (2007) Spectral clustering and transductive learning with
  multiple views. In: Proceedings of the International Conference on Machine
  Learning (ICML), ACM, pp 1159--1166

\bibitem[{Zhu et~al(2014)Zhu, Hu, Zuo, and Yang}]{Zhu2014}
Zhu P, Hu Q, Zuo W, Yang M (2014) Multi-granularity distance metric learning
  via neighborhood granule margin maximization. Information Sciences
  282:321--331

\end{thebibliography}

\end{document}